\documentclass[11pt,oneside,reqno]{amsart}

\usepackage[utf8]{inputenc}
\usepackage[T1]{fontenc}
\usepackage{adjustbox, algorithm, algpseudocode, amsfonts, amssymb, amsthm, amsmath, array, authblk, bm, booktabs, braket, caption, changepage, comment, dsfont, enumitem, etoolbox, fullpage, graphicx, mathrsfs, mathtools, microtype, multicol, multirow, nicefrac, pifont, pythonhighlight, relsize, subfloat, tabularx, textcomp, url, xcolor, xpatch}

\usepackage{bbm}
\usepackage{float}
\usepackage{placeins}
\usepackage{makecell}
\usepackage{tikz, tikz-cd}
\usetikzlibrary{arrows.meta, positioning, shapes.geometric, decorations.pathreplacing}

\usepackage{subcaption}

\usepackage[round]{natbib}
\usepackage{bibunits}

\setlist[enumerate]{leftmargin=1.5em, itemsep=0.5em}
\setlist[itemize]{leftmargin=1.5em, itemsep=0.5em}

\usepackage[foot]{amsaddr}

\usepackage{titlesec}

\titleformat{\section}
	{\titlerule\vspace{-2ex}}
	{\thesection.}{1ex}{\centering\scshape}
	[\vspace{.5ex}\titlerule]

\titleformat{\subsection}[runin]
  {\normalfont\normalsize\bfseries}{\thesubsection.}{1ex}{}[.]
\titlespacing*{\subsection}{0pt}{0.0\baselineskip}{0.5ex}

\titleformat{\subsubsection}[runin]
  {\normalfont\normalsize\itshape}{\thesubsubsection.}{1ex}{}
\titlespacing*{\subsubsection}{0pt}{0.0\baselineskip}{0.5ex}

\newtheorem{theorem}{Theorem}[section]

\newtheoremstyle{style}
  {\baselineskip}
  {0em}
  {\itshape}
  {}
  {\bfseries}
  {.}
  {.5em}
  {}
\theoremstyle{style}

\newtheorem*{theorem*}{Theorem}

\newtheorem*{definition*}{Definition}

\newtheorem{lemma}{Lemma}[section]
\newtheorem*{lemma*}{Lemma}
\newtheorem{prop}[theorem]{Proposition}
\newtheorem*{prop*}{Proposition}

\usepackage{hyperref}
\hypersetup{
	colorlinks=true,
	linkcolor=black,
	citecolor=black,
	urlcolor=blue}

\usepackage[noabbrev,capitalise]{cleveref}
\creflabelformat{equation}{#2\textup{#1}#3}
\crefname{assumption}{Assumption}{Assumptions}
\Crefname{assumption}{Assumption}{Assumptions}
\crefname{cor}{Corollary}{Corollaries}
\Crefname{cor}{Corollary}{Corollaries}
\crefname{corollary}{Corollary}{Corollaries}
\Crefname{corollary}{Corollary}{Corollaries}
\crefname{definition}{Definition}{Definitions}
\Crefname{definition}{Definition}{Definitions}
\crefname{example}{Example}{Examples}
\Crefname{example}{Example}{Examples}
\crefname{prop}{Proposition}{Propositions}
\Crefname{prop}{Proposition}{Propositions}
\crefname{remark}{Remark}{Remarks}
\Crefname{remark}{Remark}{Remarks}

\algrenewcommand\algorithmicrequire{\textbf{Input:}}
\algrenewcommand\algorithmicensure{\textbf{Output:}}

\usepackage[colorinlistoftodos, textsize=tiny, textwidth=2.2cm, backgroundcolor=blue!10, linecolor=magenta, bordercolor=magenta]{todonotes}

\allowdisplaybreaks
\title{{\Large J}{\large ust add noise}\\\vspace{0.5em} {\normalsize D}{\small ebiasing tree-based variable importance in mixed data}}

\author{{\large J}iahe {\large L}i\textsuperscript{1}}
\author{{\large O}mar {\large M}elikechi\textsuperscript{1,*}}

\address{\textsuperscript{1}Department of Statistical Science, Duke University}
\address{\textsuperscript{*}Corresponding author: omar.melikechi@duke.edu}

\begin{document}

\frenchspacing

\begin{bibunit}[ims]

\date{}
\maketitle

\begin{abstract}

Variable importance scores from tree-based methods such as random forests favor continuous predictors over categorical ones. We present a theoretical analysis of this bias and propose a simple remedy: add a small amount of noise to each categorical predictor. The correction is demonstrated on a variety of simulated and real-world datasets and combined with integrated path stability selection to perform variable selection with false discovery control for mixed data.

\end{abstract}


\section{Introduction}\label{sec:intro}


Variable importance scores from tree-based algorithms such as random forests \citep{breiman2001random} and gradient boosting \citep{friedman2001greedy} are routinely used to identify relevant features in supervised prediction problems. When datasets contain both continuous and categorical predictors, however, standard importance scores from such methods are biased, often favoring continuous features over categorical ones, and categorical features with more levels over those with fewer \citep{strobl2007bias}. Selecting features based on these scores can therefore be misleading in mixed data settings, which are ubiquitous in medicine, engineering, and the physical and social sciences \citep{solorio2022survey,warkiani2025comprehensive}.

Despite extensive empirical evidence of this bias, there remains little theory about mixed data in tree-based algorithms. Most theoretical results about tree-based importance scores assume that all predictors are continuous \citep{li2019debiased,scornet2023trees,ramosaj2023consistent}. One exception is \citet{louppe2013understanding}, who establish asymptotic properties of mean decrease impurity (MDI) when all features and the response are categorical. However, in addition to excluding continuous features, their results are restricted to totally randomized trees and thus do not apply to the classification and regression trees (CART) upon which many tree-based methods are built \citep{breiman1984classification}.

In this paper, we formally analyze the longstanding but unproven claim that mixed data MDI bias results from a fundamental asymmetry in CART and similar decision tree algorithms: at a node containing $n$ samples, continuous variables admit $n-1$ possible splits almost surely, while the number of ways to split a categorical variable depends only on the number of categories it has (\cref{fig:structural_asymmetry}). We first establish results when all predictors are independent of the response---the \textit{null case}---showing that CART assigns all importance to continuous predictors as $n\to\infty$. We then characterize MDI bias in the model $Y = f(X) + g(Z) + \varepsilon$, where $X$ is continuous, $Z$ is binary, and $\varepsilon$ is noise. Our analyses show that---because $Z$ can only be split once---the residual variance from $\varepsilon$ is necessarily absorbed by $X$, spuriously inflating its importance (\cref{fig:nonnull_allocation}).

After presenting our theoretical results, we propose a simple solution to the mixed data problem: add a small amount of noise to each categorical feature. Simulation and real data results show that this approach is remarkably effective at removing MDI bias. Furthermore, we find that importance scores are insensitive to the magnitude of the noise and that adding noise generally preserves, or even improves, predictive performance relative to unperturbed baselines. Finally, adding noise is trivial to implement, computationally inexpensive, and seamlessly integrates with virtually any software.

\begin{figure}[htbp]
    \centering
    \includegraphics[width=0.85\linewidth]{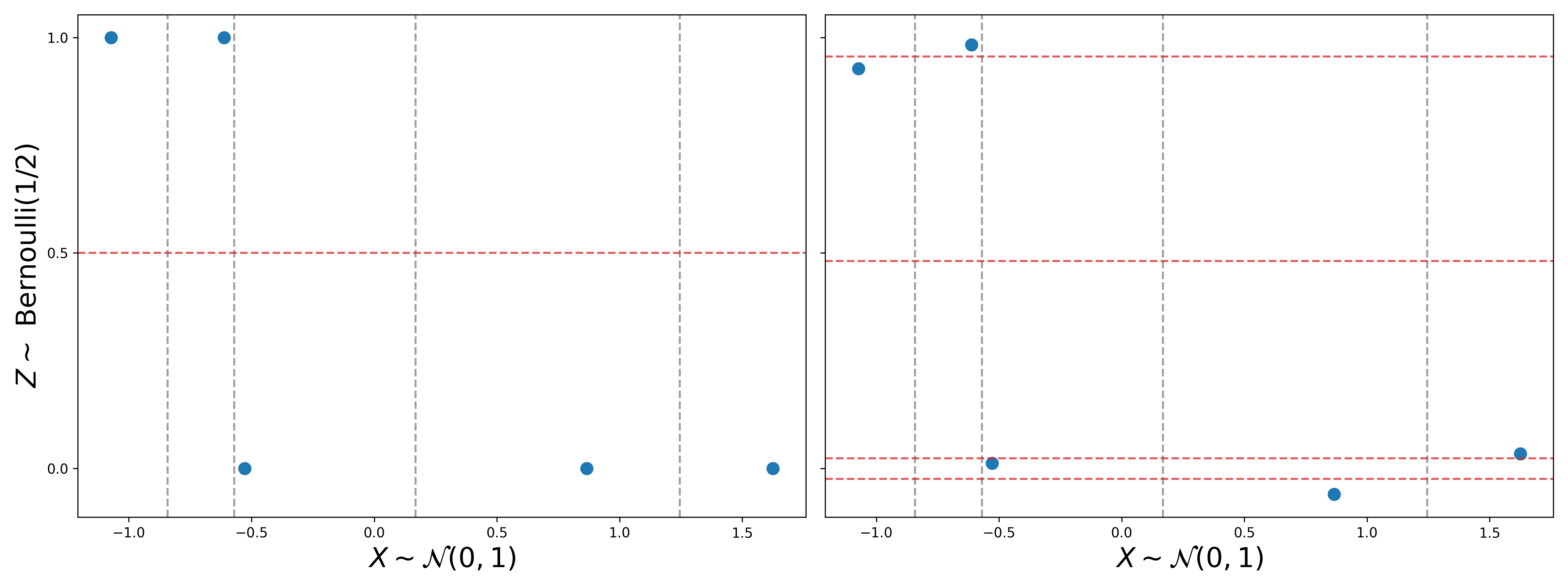}
    \caption{\textit{Adding noise yields equal numbers of candidate splits}. Five samples are drawn from the independent pair $(X,Z)$, where $X\sim\mathcal{N}(0,1)$ and $Z\sim\text{Bernoulli}(1/2)$. (\textit{Left}) The 5 samples of $X$ admit 4 splits (vertical lines), while the $Z$ samples only admit 1 (horizontal line). (\textit{Right}) The $Z$ samples admit 4 splits after adding noise.}
    \label{fig:structural_asymmetry}
\end{figure}

\begin{figure}[htbp]
    \centering    \includegraphics[width=0.75\linewidth]{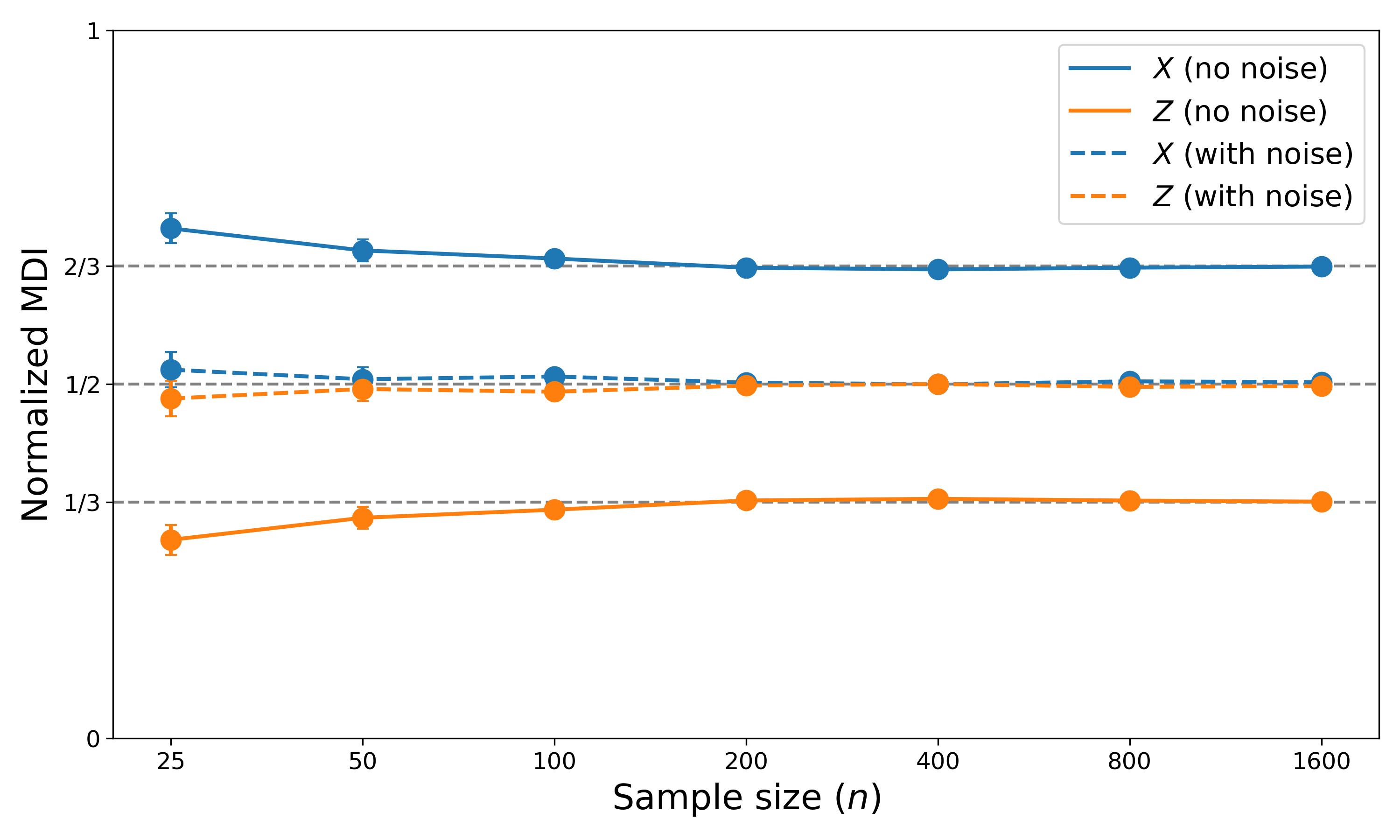}
    \caption{\textit{Mixed data bias.} MDI scores from CART, plotted as a function of $n$ for the model $Y=X+Z+\varepsilon$, where $X\sim N(0,1)$, $Z=\pm1$ with equal probability, and $\varepsilon\sim N(0,1)$ are mutually independent. Although $\operatorname{Var}(X)=\operatorname{Var}(Z)=1$, normalized MDIs for $X$ and $Z$ converge to $2/3$ and $1/3$, respectively, in accordance with \cref{thrm:nonnull_allocation}. After adding noise to $Z$, both scores converge to $1/2$. Scores are averaged over 200 trials; error bars show 95\% Monte Carlo confidence intervals for the mean.}
    \label{fig:nonnull_allocation}
\end{figure}


\subsection*{Related work}


Several methods have been proposed to correct MDI bias in mixed data. Conditional inference forests \citep[CForest;][]{hothorn2006unbiased,strobl2007bias} replace impurity-based greedy split selection with hypothesis tests for association at each node; this removes the selection bias but requires permutation tests at every split, making the method computationally prohibitive even in moderate dimensions (\cref{tab:runtime_n}). Out-of-bag corrections \citep{li2019debiased,zhou2021unbiased} recompute impurity decreases on held-out observations and are substantially faster. However, unlike adding noise, all of these approaches require modifying either the tree-building algorithm or the importance scoring step, making them less directly compatible with the standard implementations of tree-based algorithms that are deeply embedded in existing statistical and machine learning software.


\subsection*{Organization}


In \cref{sec:theory}, we present theoretical results about MDI bias in mixed data. In \cref{sec:just_add_noise}, we introduce and discuss adding noise. \cref{sec:simulations} presents simulation studies and \cref{sec:realdata} applies the method to real data. \cref{sec:discussion} concludes.


\section{Theory}\label{sec:theory}



\subsection{Setup and notation}\label{sec:setup}


For a positive integer $n$, define $[n] = \{1, \ldots, n\}$, and for a finite set $A$, let $|A|$ denote its size. We call $B$ a \textit{proper subset} of $A$ if $B$ is nonempty, $B \subseteq A$, and $B \neq A$; the nonemptiness requirement is not standard but will be convenient in what follows. Write $\mathcal{P}^\ast(A)$ for the collection of proper subsets of $A$ in this sense.

Assume we have $n$ independent and identically distributed (i.i.d.) samples $\{(x_i, z_i, y_i)\}_{i=1}^n$, defined on a common probability space $(\Omega, \mathcal{A}, \mathbb{P})$ with expectation $\mathbb{E}$. Throughout, $x_i = (x_{i1}, \ldots, x_{ip})$ is an observation of the continuous predictors $X = (X_1, \ldots, X_p) \in \mathbb{R}^p$; $z_i = (z_{i1}, \ldots, z_{iq})$ is an observation of the categorical predictors $Z = (Z_1, \ldots, Z_q) \in \mathcal{Z} = \mathcal{Z}_1 \times \cdots \times \mathcal{Z}_q$ with each $\mathcal{Z}_k$ a finite set of size $|\mathcal{Z}_k| \geq 2$; and $y_i$ is an observation of the response $Y\in\mathbb{R}$. We assume each coordinate $X_j$ is absolutely continuous with respect to Lebesgue measure on $\mathbb{R}$, so that $x_{1j}, \ldots, x_{nj}$ are almost surely distinct. We also assume $\operatorname{Var}(Y)\in (0,\infty)$. Throughout, $i \in [n]$ indexes samples, $j \in [p]$ continuous predictors, and $k \in [q]$ categorical predictors. We write $\mathbf{1}(\cdot)$ for the indicator function and $\overset{p}{\longrightarrow}$ for convergence in probability. For any $t \subseteq [n]$, set $S_t = \sum_{i \in t} y_i$, $n_t = |t|$, and $\bar y_t = S_t / n_t$.

Classification and regression trees (CART) are constructed by iteratively splitting nodes $t \subseteq [n]$ into proper subsets $t_L$ and $t_R = t \setminus t_L$, called \textit{children} of $t$, starting from the root node $t = [n]$. Given an impurity function $\mathcal{I}$, the decrease in impurity associated with $t_L$ (equivalently, $t_R$) is
\begin{equation}
\Delta \mathcal{I}(t, t_L)
= \mathcal{I}(t)
- \frac{|t_L|}{n_t} \mathcal{I}(t_L)
- \frac{|t_R|}{n_t} \mathcal{I}(t_R).
\label{eq:impurity_decrease}
\end{equation}
In this work, we take $\mathcal{I}$ to be the variance,
\begin{equation}
\mathcal{I}(t)
= \frac{1}{n_t} \sum_{i \in t}(y_i - \bar{y}_t)^2
= \frac{1}{n_t}\sum_{i \in t} y_i^2 - \frac{S_t^2}{n_t^2}.
\label{eq:variance_impurity}
\end{equation}
When $Y$ is binary, $\bar{y}_t$ is the proportion of samples in $t$ with $y_i=1$. Since $y_i^2 = y_i$, \cref{eq:variance_impurity} becomes $\mathcal{I}(t) = \bar{y}_t(1 - \bar{y}_t)$, which is one half the Gini index \citep{breiman1984classification}.

CART splits $t$ by identifying proper subsets $t_L$ that maximize $\Delta \mathcal{I}(t, t_L)$ with respect to each predictor. For a continuous predictor $X_j$, let $x_{(1)j}, \ldots, x_{(n_t)j}$ be the reordering of the observed data $\{x_{ij} : i \in t\}$ in increasing order and, for $1 \leq \ell < n_t$, define
\begin{align*}
L_\ell^{t}(X_j)
=
\Bigl\{ i \in t :
x_{ij} \leq \tfrac{1}{2}\bigl(x_{(\ell)j} + x_{(\ell+1)j}\bigr) \Bigr\}.
\end{align*}
In \cref{fig:structural_asymmetry}, for example, the midpoints $(x_{(\ell)j} + x_{(\ell+1)j})/2$ are represented by the vertical dashed lines. The maximum decrease in impurity along $X_j$ at node $t$ is then
\begin{equation}
M_{t}(X_j)
= \max_{1 \leq \ell < n_t}
\Delta\mathcal{I}\bigl(t, L_\ell^{t}(X_j)\bigr).
\label{eq:continuous_gain}
\end{equation}
For a categorical predictor $Z_k$ and any $B \in \mathcal{P}^\ast(\mathcal{Z}_k)$, define $L_B^{t}(Z_k) = \{i \in t : z_{ik} \in B\}$. Let $\mathcal{B}_t(Z_k)=\{B\in\mathcal{P}^\ast(\mathcal{Z}_k):0<|L_B^t(Z_k)|<n_t\}$ be the valid categorical splits at node $t$. The maximum decrease in impurity along $Z_k$ at $t$ is
\begin{equation}
M_{t}(Z_k)
= \max_{B \in \mathcal{B}_t(Z_k)}
\Delta\mathcal{I}\bigl(t, L_B^{t}(Z_k)\bigr).
\label{eq:categorical_gain}
\end{equation}
The maximum over an empty set is defined to be zero.
A key point in what follows is that, because $L_B^t(Z_k)$ and $L_{\mathcal{Z}_k \setminus B}^t(Z_k)$ induce identical splits, $Z_k$ admits at most $2^{|\mathcal{Z}_k|-1}-1$ ways to split $t$. By contrast, continuous variables admit $n_t-1$ possible splits almost surely, a number that grows with node size. The CART split at $t$ occurs along the predictor with the largest impurity decrease; thus a split occurs along a continuous variable if
\begin{align*}
\max_{1\leq j \leq p} M_{t}(X_j) > \max_{1 \leq k \leq q} M_{t}(Z_k).
\end{align*}

Finally, the \textit{mean decrease impurity} (MDI) of $X_j$ in an ensemble of decision trees $\mathcal{T}$ is
\begin{equation}
\Phi_n(X_j)
= \frac{1}{|\mathcal{T}|}
\sum_{T \in \mathcal{T}} \sum_{t \in T}
\frac{n_t}{n}\,M_{t}(X_j)\,
\mathbf{1}\bigl(t \text{ splits along } X_j\bigr),
\label{eq:mdi}
\end{equation}
where $M_{t}(X_j) = 0$ at terminal nodes. $\Phi_n(Z_k)$ is defined analogously with $Z_k$ in place of $X_j$.


\subsection{The null model}\label{sec:null}


We first quantify MDI bias when all predictors carry no signal. The \textit{null model} refers to the case where $Y$ is mutually independent of each predictor: $Y \perp X_j$ and $Y \perp Z_k$ for all $j \in [p]$ and $k \in [q]$. In the stronger \textit{joint null model}, $Y$ is jointly independent of all predictors: $Y \perp (X_1, \ldots, X_p, Z_1, \ldots, Z_q)$. Note that both models make no assumptions about the dependence structure between predictors. Our first result is about splits at the root node in the null model.

\begin{theorem}[Asymptotic dominance of continuous splits]\label{thrm:null_root}
Under the null model,
\begin{align*}
\lim_{n\to\infty}
\mathbb{P}\!\left(
\max_{1\leq j\leq p} M_n(X_j)
>
\max_{1\leq k\leq q} M_n(Z_k)
\right)
= 1,
\end{align*}
where $M_n(X_j) = M_{[n]}(X_j)$ and $M_n(Z_k) = M_{[n]}(Z_k)$ are the maximum impurity decreases at the root node, given by \cref{eq:continuous_gain,eq:categorical_gain} with $t = [n]$.
\end{theorem}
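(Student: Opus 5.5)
The plan is to compare the two sides on the scale $1/n$. Each categorical gain $n M_n(Z_k)$ should be $O_p(1)$: there are finitely many candidate splits, and each behaves like a squared CLT fluctuation. The continuous gain $n M_n(X_1)$ should diverge in probability, because the $n-1$ ordered splits let the maximum find arbitrarily large standardized partial-sum excursions. Given these two facts, fix $K>0$ and bound
\[
\mathbb{P}\Bigl(\max_j M_n(X_j) \le \max_k M_n(Z_k)\Bigr) \le \mathbb{P}\bigl(nM_n(X_1)\le K\bigr)+\sum_{k=1}^q \mathbb{P}\bigl(nM_n(Z_k)\ge K\bigr).
\]
Letting $n\to\infty$ and then $K\to\infty$ gives the theorem. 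Only the marginal independences $Y\perp X_1$ and $Y\perp Z_k$ are used, as the null model allows.

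The first step is an algebraic identity. Write $\mu=\mathbb{E}Y$ and $\sigma^2=\operatorname{Var}(Y)\in(0,\infty)$. For any split of $[n]$ into $t_L,t_R$ with $n_L=|t_L|$ and $n_R=|t_R|$, a direct computation from \cref{eq:variance_impurity} gives
\[
n\,\Delta\mathcal{I}([n],t_L)=\frac{n\,(S_{t_L}-n_L\bar y_{[n]})^2}{n_L n_R}.
\]

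The second step handles the categorical side. Fix $k$ and $B\in\mathcal{P}^\ast(\mathcal{Z}_k)$, and let $p_B=\mathbb{P}(Z_k\in B)$.
\begin{itemize}
\item If $p_B\in\{0,1\}$, the split is a.s.\ invalid and contributes zero.
\item Otherwise $n_L/n\to p_B$ and $n_R/n\to 1-p_B$ a.s.
\end{itemize}
In the second case, write
\[
S_{t_L}-n_L\bar y=\sum_i (y_i-\mu)\mathbf{1}(z_{ik}\in B)-\frac{n_L}{n}\sum_i (y_i-\mu).
\]
The summands of the first sum are i.i.d., mean zero (by $Y\perp Z_k$) and have variance $\sigma^2 p_B$, so the first sum is $O_p(\sqrt n)$ by the CLT. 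The second sum is $O_p(\sqrt n)$ in the same way. Hence $n\Delta\mathcal{I}=O_p(1)$ for each $B$. Taking the maximum over the finitely many $B$ shows that $nM_n(Z_k)$ is tight.

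The third step handles the continuous side, using only $X_1$. Since $(x_{i1},y_i)$ are i.i.d.\ with independent coordinates and $x_{i1}$ a.s.\ distinct, the responses listed in increasing order of $x_{i1}$, say $y_{(1)},\dots,y_{(n)}$, are again i.i.d.\ with the law of $Y$. Let $T_\ell=\sum_{m\le\ell}(y_{(m)}-\mu)$. For $\ell\le n/2$ we have $n/(n-\ell)\ge 1$, so the identity gives
\[
nM_n(X_1)\ \ge\ \max_{\ell\le m_n}\frac{\bigl(T_\ell-\tfrac{\ell}{n}T_n\bigr)^2}{\ell}
\]
for any $m_n\to\infty$ with $m_n=o(n)$.

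The correction term satisfies $\sqrt{\ell}\,|T_n|/n\le\sqrt{m_n/n}\,O_p(1)\to0$ in probability. It therefore suffices to show that $\max_{\ell\le m_n}|T_\ell|/\sqrt\ell\to\infty$ in probability.

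I expect this divergence to be the main obstacle, because it must be a statement about a triangular array: the sorted sequence is rebuilt for each $n$. The fix is that, for each fixed $n$, $(y_{(1)},\dots,y_{(n)})$ has the same law as the first $n$ terms of one fixed i.i.d.\ sequence $(\tilde y_m)_{m\ge1}$. So the probability in question equals the same probability for the partial sums $\tilde T_\ell$ of that sequence. By the law of the iterated logarithm, $\limsup_\ell |\tilde T_\ell|/\sqrt{\ell}=\infty$ a.s., so $\max_{\ell\le m}|\tilde T_\ell|/\sqrt\ell\uparrow\infty$ a.s. as $m\to\infty$. Hence for every $K$,
\[
\mathbb{P}\Bigl(\max_{\ell\le m_n}|\tilde T_\ell|/\sqrt\ell>K\Bigr)\to1 .
\]
A weaker alternative to the LIL is to use the CLT along geometrically spaced $\ell$ with asymptotically independent increments. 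Combining this with the correction bound shows that $nM_n(X_1)\to\infty$ in probability, which completes the bound from the first paragraph.
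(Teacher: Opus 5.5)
Your proposal is correct and follows essentially the same route as the paper. The paper likewise reduces to $X_1$ versus each $Z_k$ with a union bound, shows the categorical gains are $O_p(1)$ on the $1/n$ scale because there are finitely many splits, and shows the continuous gain diverges because the $x$-sorted responses have the law of a fixed i.i.d.\ sequence, to which the law of the iterated logarithm applies. The differences are only in bookkeeping: the paper gets categorical tightness from $\mathbb{E}[\varphi_n(L_B)\mid z_1,\ldots,z_n]=2$ plus Markov's inequality rather than the CLT, and it handles centering by standardizing $Y$ and subtracting $S_n^2/n=O_p(1)$, rather than by restricting to $\ell\le m_n=o(n)$.
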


\cref{thrm:null_root} characterizes mixed data bias at the root node, stating that CART splits along a continuous variable with probability one as $n\to\infty$ under the null model. Under the joint null model, this bias extends to trees of bounded depth, provided every internal node contains a fixed positive fraction of the total number of samples, $n$.

\begin{theorem}[Vanishing importance of categorical predictors]
\label{thrm:tree_null}
Assume the joint null model and let $\mathcal{T}$ be a finite CART ensemble in which each tree is of bounded depth, $D\geq1$. Assume that, for some fixed $c\in(0,1)$, every internal node $t$ satisfies $n_t \geq cn$. Then the probability of splitting along a categorical variable at any node in any tree in the ensemble vanishes asymptotically; that is,
\begin{align*}
\lim_{n\to\infty}
\mathbb{P}\!\left(
\sum_{T\in\mathcal{T}}\sum_{t\in T}\sum_{k=1}^q
\mathbf{1}\bigl(t \text{ splits along } Z_k\bigr)>0
\right)=0.
\end{align*}
Consequently, for the MDI importances defined in \cref{eq:mdi},
\begin{align*}
\frac{\sum_{k=1}^q \Phi_n(Z_k)}
{\sum_{j=1}^p \Phi_n(X_j)+\sum_{k=1}^q \Phi_n(Z_k)}
&\overset{p}{\longrightarrow} 0
\qquad\text{and}\qquad
\frac{\sum_{j=1}^p \Phi_n(X_j)}
{\sum_{j=1}^p \Phi_n(X_j)+\sum_{k=1}^q \Phi_n(Z_k)}
\overset{p}{\longrightarrow} 1.
\end{align*}
\end{theorem}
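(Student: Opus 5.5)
The plan is to reduce \cref{thrm:tree_null} to a uniform version of the root-node comparison in \cref{thrm:null_root}, applied to every node that can occur in a bounded-depth tree. Under the joint null model, $Y$ is independent of the whole predictor vector. So, conditionally on the predictors, the responses $y_1,\dots,y_n$ are i.i.d.\ copies of $Y$, independent of all split decisions, and no node can ever carry signal.

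The first step quantifies both sides of the root comparison. Write $\sigma^2=\operatorname{Var}(Y)$. A categorical split with fixed $B$ gives $\Delta\mathcal{I}(t,L_B^t)=\frac{n_L n_R}{n_t^2}(\bar y_L-\bar y_R)^2$. If the partition is determined by predictors independent of $Y$, this is $O_p(1/n_t)$. Since each $Z_k$ has at most $2^{|\mathcal{Z}_k|-1}-1$ splits, $n_t\max_k M_t(Z_k)=O_p(1)$.

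For continuous predictors the scaling is different. The sorted partial sums of $y$ along $X_j$ form a random walk. Hence $n_t M_t(X_j)$ is $\sigma^2$ times the maximum over $\ell$ of a squared standardized CUSUM statistic, which diverges in probability; by Darling--Erdős it grows like $2\log\log n_t$. Both facts presumably underlie \cref{thrm:null_root}.

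The second step handles internal nodes. A node $t$ at depth $d\le D$ in tree $T$ is a random subset of $[n]$ chosen using the $y$'s themselves, so one cannot simply condition on $t$. To get around this, I will use a union bound over a finite but growing family of candidate nodes. Each node at depth $d$ is determined by its ancestors' split choices: a variable, plus either a threshold index ($\le n$ options) or a category subset ($O(1)$ options). So there are at most $(C n)^{2^D}$ candidate nodes, which is polynomial in $n$.

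That count is too large to handle with the $O_p$ statements from the first step, so I need exponential tail bounds. On the categorical side, for a fixed candidate node with $n_t\ge cn$ and fixed $B$, sub-Gaussian or moment tails for $\sqrt{n_t}(\bar y_L-\bar y_R)$ are needed. But the assumption is only $\operatorname{Var}(Y)<\infty$, so I would first try truncation. Alternatively, observe that a categorical partition of a node is itself determined by predictors only: for a fixed path, the candidate node and its categorical children are predictor-measurable sets. This suggests a cleaner route that avoids counting thresholds.

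The cleaner route works as follows. Condition on the full predictor data and treat each node as a fixed set defined by a path of splits. The key observation is that the actual tree is selected by $y$, but the event ``some node splits on $Z$'' requires some path node $t$ with $n_t\ge cn$ where
\[
\max_k M_t(Z_k)\ \ge\ \max_j M_t(X_j).
\]
I will bound $\max_k M_t(Z_k)\le \frac{1}{n_t}\max_{B}\bigl(\text{squared normalized difference}\bigr)$ uniformly over all predictor-defined subsets reachable at depth $\le D$. Since there are polynomially many such subsets, this gives a bound of $O_p(\log n/n)$. For the continuous side I need
\[
\min_t n_t M_t(X_j)\gg \log n
\]
uniformly over candidate $t$, and a CUSUM over a set of size $\ge cn$ only gives about $\log\log n$. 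This mismatch is the main obstacle.

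To overcome it I would refine the counting. I will enumerate paths by the sequence of split variables and category subsets only, which is a bounded number, and treat the continuous thresholds as index sets. The claim is that the categorical statistic within a node $t$ depends on $y$ only through sums over intersections of $t$ with the cells of $Z$. Each such intersection is the intersection of at most $D$ half-line sets in the $X$-orders with $Z$-cells, so the family is an empirical process over a VC class. A maximal inequality (Kolmogorov/Hájek–Rényi type for partial sums, iterated over $D$ thresholds) then gives a uniform bound of order $\log\log n$ with a constant smaller than the Darling--Erdős growth. If that comparison still fails, the fallback is a direct argument: conditional on the tree up to depth $d$, I would show the fresh Brownian-bridge fluctuation along a continuous coordinate within $t$ dominates. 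Being sure these constants work is the hard step.

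The final step is routine. The ensemble is finite and each tree has at most $2^D-1$ internal nodes, so the union over nodes and trees gives the first display. On the complementary event every $\Phi_n(Z_k)=0$. Moreover, $\sum_j\Phi_n(X_j)\ge \frac{1}{|\mathcal{T}|}M_n(X_{j^*})>0$ almost surely, because $Y$ is nondegenerate, so at the root some $\Delta\mathcal{I}>0$ with probability tending to one. Hence the ratios are exactly $0$ and $1$ with probability tending to one, which gives the convergence in probability.
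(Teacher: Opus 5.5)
\textbf{There is a genuine gap in Step 2, the handling of response-dependent internal nodes.} Step 2 never closes, and none of the routes you sketch can be completed.

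The union bound over polynomially many candidate nodes fails for the reason you give: $\log n$ on the categorical side against $\log\log n$ on the continuous side. The VC refinement does not rescue it. A Donsker bound for $\{(x,z,y)\mapsto y\mathbf{1}((x,z)\in R)\}$, over rectangles times category sets, does give $\sup_R|H_n(R)|=O_p(1)$ with $H_n(R)=n^{-1/2}\sum_i y_i\mathbf{1}((x_i,z_i)\in R)$. To finish your comparison, however, you would also need $\max_j n_tM_t(X_j)\to\infty$ uniformly over all predictor-defined regions with $n_t\ge cn$. (You would further need categorical children of size bounded below, which the joint null does not guarantee, since $X$ and $Z$ may be dependent.)

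That uniform divergence is false, already for $p=q=1$ with $X$, $Z$, $Y$ mutually independent. Brownian motion has one-sided slow points. Hence, with probability bounded away from zero, there are thresholds $a<b$ with $F(b)-F(a)\ge c$ ($F$ the distribution function of $X_1$) such that every CUSUM of the $y$'s inside $(a,b]$ stays below a fixed level $K$. Meanwhile the categorical statistic in that same node is driven by the independent $Z$-labels, so it exceeds $K$ with non-vanishing probability. No argument that treats all candidate nodes of size at least $cn$ uniformly can therefore work. The Darling--Erd\H{o}s growth you invoke holds only for response-independent nodes. Your fallback, conditioning on the tree up to depth $d$, fails for the reason you yourself noted: the tree is a function of the $y$'s, so there is no fresh Brownian bridge inside $t$.

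\textbf{The missing idea is that CART's null splits are extremely unbalanced, which confines internal nodes to a single nearly-full path.} This is where $n_t\ge cn$ really enters.
\begin{itemize}
\item Any split whose children both have size at least $\eta n_t$ has $n\Delta\mathcal{I}(t,t_L)=O_p(1)$ uniformly. The numerator is controlled by $\sup_R|H_n(R)|$ and the denominator is bounded below.
\item The best continuous split has $nM_t(X_1)\to\infty$. So the maximizing split peels off only $o_p(n)$ observations.
\item The node-size condition forces that small child to be terminal. All internal nodes therefore lie on the larger-child path, and by induction over the at most $D$ depths each has $n_t/n\to1$ in probability.
\item For such nearly full nodes, response-dependent selection is harmless. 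Asymptotic equicontinuity of the Donsker process gives $\sup_A|H_n(R_n\cap A)-H_n(A)|\to0$ in probability whenever $P_n(R_n)\to1$.
\item Consequently the categorical gains stay $O_p(1/n)$. The gains at fixed quantile thresholds inherit the full-sample limits $\mathbb{B}(u)^2/\{u(1-u)\}$, whose supremum over $u$ is infinite.
\end{itemize}
In your proposal, $n_t\ge cn$ is used only to say that nodes are large, which the slow-point obstruction shows is not enough.

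Your Step 1 (the root comparison) and your final bookkeeping step (the ratios equal exactly $0$ and $1$ on the good event) are fine.
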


The node-size condition, $n_t\geq cn$, is a minimum split-size requirement that appears as a parameter in many software implementations of tree-based algorithms, such as the \texttt{min\_samples\_split} parameter in \texttt{scikit-learn} \citep{pedregosa2011scikit}. Under this condition, the continuous predictors collectively receive all normalized MDI asymptotically, while categorical predictors receive none.

\cref{thrm:null_root,thrm:tree_null} establish a preference for continuous predictors over categorical ones, but leave open whether MDI discriminates among predictors of the same type. The following proposition shows it does not, at least in the case of exchangeable continuous predictors.

\begin{prop}[Equal expected normalized importance]\label[prop]{prop:fair}
Assume there are no categorical predictors and that $X_1,\ldots,X_p$ are exchangeable and jointly independent of $Y$. Let $\mathcal T$ be a finite CART ensemble with $\sum_{\ell=1}^p\Phi_n(X_\ell)>0$ almost surely. Then, for every $j\in[p]$,
\begin{align*}
\mathbb E\!\left[
\frac{\Phi_n(X_j)}
{\sum_{\ell=1}^p\Phi_n(X_\ell)}
\right]
=
\frac1p.
\end{align*}
\end{prop}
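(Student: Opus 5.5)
The proof is a symmetry argument. Write $R_j = \Phi_n(X_j)/\sum_{\ell=1}^p\Phi_n(X_\ell)$; these are well defined almost surely by assumption. Since $0\le R_j\le 1$, each expectation exists. Moreover $\sum_{j=1}^p R_j = 1$ almost surely, so
\begin{align*}
\sum_{j=1}^p \mathbb{E}[R_j] = 1 .
\end{align*}
It therefore suffices to show that $\mathbb{E}[R_j]$ does not depend on $j$, i.e.\ $\mathbb{E}[R_j]=\mathbb{E}[R_{j'}]$ for every transposition $\pi=(j\,j')$ of $[p]$.

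\textbf{Step 1: distributional invariance of the data.} Write $\mathbf{x}^\pi$ for the sample with columns permuted by $\pi$, so that $x^\pi_{ij}=x_{i\pi(j)}$. By exchangeability, $(X_{\pi(1)},\dots,X_{\pi(p)})\overset{d}{=}(X_1,\dots,X_p)$. Because $Y$ is independent of $X$, the pair $(X^\pi,Y)$ has the law of $(X,Y)$. Since the samples are i.i.d., the whole dataset $D=\{(x_i,y_i)\}$ then has the same law as $D^\pi=\{(x_i^\pi,y_i)\}$. This is the only place where independence from $Y$ is used: it ensures that relabeling the predictors does not change their joint law with the response.

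\textbf{Step 2: equivariance of CART and MDI.} Viewing $\Phi_n(X_j)$ as a function $\Phi_n(X_j;D)$ of the data, I would show that
\begin{align*}
\Phi_n(X_j;D^\pi)=\Phi_n(X_{\pi(j)};D).
\end{align*}
The splits $L^t_\ell(X_j)$ and the gains $M_t(X_j)$ in \cref{eq:continuous_gain} depend only on the column $x_{\cdot j}$ and on $y$. Permuting the columns therefore permutes the vector $(M_t(X_1),\dots,M_t(X_p))$ at each node. By induction over depth, the argmax selects the correspondingly permuted variable, and the resulting child nodes are identical. Summing in \cref{eq:mdi} gives the displayed identity, and hence $R_j(D^\pi)=R_{\pi(j)}(D)$.

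\textbf{Step 3: conclusion.} Combining Steps 1 and 2,
\begin{align*}
\mathbb{E}[R_j(D)] = \mathbb{E}[R_j(D^\pi)] = \mathbb{E}[R_{\pi(j)}(D)].
\end{align*}
Taking $\pi=(j\,j')$ shows that all $\mathbb{E}[R_j]$ are equal, and the normalization above then forces each to equal $1/p$.

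\textbf{Main obstacle: making Step 2 rigorous.} Two issues need care.

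\emph{Ties in the argmax across variables.} If two predictors attain the same maximal gain, the tie-breaking rule must be equivariant. With absolutely continuous $X_j$, such ties are not automatically null events. For instance, when $n_t=2$ every continuous split gives the same partition. I would handle this by assuming ties are broken uniformly at random, which is equivariant in law, and running the argument with the randomization included in $D$.

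\emph{Randomness inside the ensemble.} Bootstrap draws are permutation-free with respect to columns. Feature subsampling, if present, must be exchangeable in the feature labels. I would make both points explicit: the ensemble's auxiliary randomness $U$ is independent of the data and invariant in law under relabeling of features, so that $(D^\pi,U^\pi)\overset{d}{=}(D,U)$. Equivariance then holds pathwise, as a function of $(D,U)$.
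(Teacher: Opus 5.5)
Your proposal is correct and follows the paper's proof essentially step for step: the data are invariant in law under relabeling the predictors (by exchangeability and $Y\perp X$), CART and MDI are equivariant under that relabeling, so the normalized scores have equal expectations, and these expectations sum to one. Your tie-breaking caveat is a real refinement that the paper leaves implicit in ``symmetry of the CART construction'': with a deterministic lowest-index rule the claim fails (e.g.\ for $n=2$, where every predictor induces the same split and all importance goes to $X_1$), so assuming equivariant, e.g.\ uniformly random, tie-breaking is exactly what the argument requires.
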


The condition $\sum_{\ell=1}^p\Phi_n(X_\ell)>0$ just states that at least one predictor has nonzero MDI. 


\subsection{A non-null model}\label{sec:nonnull}


We now move beyond null models and consider the additive model
\begin{align}\label{eq:non_null_model}
Y=f(X)+g(Z)+\varepsilon,
\end{align}
where $X=X_1$ is continuous, $Z=Z_1\in\{z_0,z_1\}$ is binary, and $X$, $Z$, and $\varepsilon$ are independent. We also assume $\mathbb E[f(X)^2]<\infty$, $\mathbb E[\varepsilon]=0$, $\operatorname{Var}(\varepsilon)=\sigma^2<\infty$, $\pi=\mathbb P(Z=z_0)\in(0,1)$ and $g(z_0)\ne g(z_1)$.

Let $M_*(X)$ be the population analogue of $M_n(X)$, that is, the largest population impurity decrease obtainable by splitting $X$. In \cref{lem:cont_gain}, we prove that $M_*(X)=\lim_{n\to\infty} M_n(X)$.

\begin{theorem}[Asymptotic MDI allocation]
\label{thrm:nonnull_allocation}
Assume the model in \cref{eq:non_null_model} and suppose that
\begin{align*}
\operatorname{Var}(g(Z))>M_*(X).
\end{align*}
Then the root node in CART splits on $Z$ with probability tending to one. Moreover, let $\Phi_n(X)$ and $\Phi_n(Z)$ denote the MDI importances defined in \cref{eq:mdi}, where $\mathcal T$ is any finite CART ensemble with each tree fully grown in the sense that every terminal node has zero impurity. Then
\begin{align*}
\Phi_n(Z)
\overset{p}{\longrightarrow}
\operatorname{Var}(g(Z))
\quad\text{and}\quad
\Phi_n(X)
\overset{p}{\longrightarrow}
\operatorname{Var}(f(X))+\sigma^2.
\end{align*}
Consequently, noting that $\operatorname{Var}(Y) = \operatorname{Var}(f(X))+\operatorname{Var}(g(Z))+\sigma^2$,
\begin{align*}
\frac{\Phi_n(Z)}
{\Phi_n(X)+\Phi_n(Z)}
\overset{p}{\longrightarrow}
\frac{\operatorname{Var}(g(Z))}
{\operatorname{Var}(Y)}
\quad\text{and}\quad
\frac{\Phi_n(X)}
{\Phi_n(X)+\Phi_n(Z)}
\overset{p}{\longrightarrow}
\frac{\operatorname{Var}(f(X)) + \sigma^2}
{\operatorname{Var}(Y)}.
\end{align*}
\end{theorem}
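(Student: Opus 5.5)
The plan has three steps: show the root splits on $Z$, show every later split is on $X$, and then add up the impurity decreases using the fact that fully grown trees telescope.

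\emph{Root split.} At the root, $Z$ has a single valid split, into $\{i: z_i=z_0\}$ and its complement. Its impurity decrease has a closed form in terms of the group proportions and group means:
\begin{align*}
M_n(Z)=\hat\pi(1-\hat\pi)\bigl(\bar y_{z_0}-\bar y_{z_1}\bigr)^2 .
\end{align*}
By the law of large numbers, $\hat\pi\to\pi$ and $\bar y_{z_0}-\bar y_{z_1}\to g(z_0)-g(z_1)$. Hence $M_n(Z)\overset{p}{\longrightarrow}\pi(1-\pi)(g(z_0)-g(z_1))^2=\operatorname{Var}(g(Z))$. By \cref{lem:cont_gain}, $M_n(X)\to M_*(X)$. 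Since $\operatorname{Var}(g(Z))>M_*(X)$ by assumption, the root splits on $Z$ with probability tending to one.

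\emph{Later splits.} After the root split, each child is pure in $Z$. So $\mathcal B_t(Z)=\emptyset$ and $M_t(Z)=0$ at every descendant, and every later split must be on $X$. A finite ensemble means we only need this on the intersection of finitely many events, each of probability tending to one.

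\emph{Telescoping.} For a fully grown tree, summing $\frac{n_t}{n}\Delta\mathcal I$ over all internal nodes telescopes. Each term is $\frac{n_t}{n}\mathcal I(t)$ minus the weighted impurities of the children, so the sum equals $\mathcal I([n])$ minus the weighted terminal impurities. The terminal impurities are zero, so the total is $\mathcal I([n])=\hat\sigma^2_Y\to\operatorname{Var}(Y)$. On the event of the first step, the $Z$ contribution from each tree is exactly the root decrease $M_n(Z)\to\operatorname{Var}(g(Z))$, and the $X$ contribution is the remainder, $\mathcal I([n])-M_n(Z)$. Averaging over the finitely many trees gives $\Phi_n(Z)\overset{p}{\longrightarrow}\operatorname{Var}(g(Z))$ and $\Phi_n(X)\overset{p}{\longrightarrow}\operatorname{Var}(Y)-\operatorname{Var}(g(Z))=\operatorname{Var}(f(X))+\sigma^2$. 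The last equality uses independence of $X$, $Z$ and $\varepsilon$. The normalized limits then follow from the continuous mapping theorem, because the denominator converges to $\operatorname{Var}(Y)>0$.

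\emph{Main obstacle.} The delicate point is whether an ensemble tree is plain CART on the full sample. If trees use bootstrap samples or feature subsampling, the root of a tree may not be allowed to consider $Z$. In that case $Z$ could be split lower down, and $M_t(Z)$ there would not equal the root quantity. I would first treat $\mathcal T$ as deterministic CART fits, possibly on resamples whose empirical averages still converge. For resamples, the same law-of-large-numbers argument applies conditionally, and the telescoping identity uses the resample's own variance, which also converges to $\operatorname{Var}(Y)$. A second, minor point is pure leaves when there are ties in $y$; since $X$ is continuous, a fully grown tree reaching zero impurity is guaranteed in either case.
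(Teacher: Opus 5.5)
Your proposal is correct and follows the paper's proof essentially step for step. The root split on $Z$ comes from the closed-form limit of $M_n(Z)$ together with \cref{lem:cont_gain}; purity of the children in $Z$ gives $\Phi_n(Z)=M_n(Z)$ on that event; and the telescoping identity $\Phi_n(X)+\Phi_n(Z)=\mathcal I([n])$ for fully grown trees gives the rest. The paper handles your bootstrap and feature-subsampling concern as you first propose, by taking every tree to be plain CART on the same full sample, so a single event forces every root to split on $Z$.
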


The condition $\operatorname{Var}(g(Z))>M_*(X)$ compares the population gain of the categorical split with the best possible population gain along $X$. Under this separation, the former eventually exceeds the latter, so the root splits along $Z$. Once $Z$ is split, it is constant in each child node and cannot be used again. Thus the MDI associated with $Z$ is $\operatorname{Var}(g(Z))$ asymptotically, while the remaining impurity reduction, $\operatorname{Var}(f(X)) + \sigma^2$, is allocated to $X$. The contribution of $\sigma^2$ to X's MDI importance is precisely the source of the bias illustrated in \cref{fig:nonnull_allocation}.


\section{Just add noise}\label{sec:just_add_noise}


In the previous section we showed that mixed data bias in MDI scores from CART and CART-based ensembles results from continuous predictors having more splits than categorical ones as $n$ grows. We also saw in \cref{prop:fair} that exchangeable continuous predictors have identical MDI under the joint null model, suggesting no such bias exists when all variables are continuous.

A simple way to make all predictors continuous---and thus to ensure all predictors have the same number of candidate splits almost surely---is to add continuous noise to each categorical variable. This procedure, which we call \textit{jittering}, is our proposed solution to the mixed data bias problem. For categorical predictors $Z_k$ with integer-encoded values, we add uniform noise, $\widetilde{z}_{ik} = z_{ik} + u_{ik}$, where each $u_{ik}$ is an independent draw from $\operatorname{Unif}(-\delta,\delta)$ and $\delta>0$ is the \textit{jitter strength}. Using the uniform distribution ensures distinct categorical levels remain well-separated whenever $\delta<0.5$. If $Z\in\{0,1\}$, for example, jittering will not cause observed zero and one values to overlap.


\subsection*{One-time and per-tree jittering}\label{sec:jittering_variants}


The definition above leaves open when the perturbations $u_{ik}$ are drawn. For tree ensembles, one can either jitter categorical variables before fitting the ensemble (\textit{one-time jittering}), or a fresh perturbation can be applied prior to building each individual tree (\textit{per-tree jittering}). One-time jittering is a simple preprocessing step: identify the categorical predictors, add $\operatorname{Unif}(-\delta,\delta)$ noise to each, and pass the resulting data to any prediction algorithm. Per-tree jittering instead draws noise inside the ensemble loop, making it no longer a preprocessing step. Results in \cref{sec:simulations} suggest that per-tree jittering produces slightly better rankings than one-time jittering, but the improvement is small relative to the convenience of a preprocessing step. We therefore recommend using one-time jittering in general.


\subsection*{Jitter strength}\label{sec:noise_level}


As noted above, any $\delta<0.5$ keeps integer-encoded categories separated, so that the original category is recoverable from the jittered value. Sensitivity analyses in \cref{supsec:sensitivity} show that results are highly robust to varying $\delta$: feature recovery and predictive performance are essentially unchanged as $\delta$ ranges over multiple orders of magnitude. In particular, $\delta$ does not require tuning. We use $\delta=10^{-4}$ throughout this work.


\subsection*{Effect on prediction}\label{sec:jitter_prediction}


Our aim is to reduce mixed data importance bias, but adding noise raises the question of whether it harms predictive accuracy. To test this, we compared random forest and XGBoost prediction errors to those of their jittered counterparts on 51 datasets from the Penn Machine Learning Benchmark \citep{Olson2017PMLB}. Results reported in \cref{supsec:prediction} indicate that jittering has a negligible effect on prediction: across all 51 datasets, jittering leaves the prediction error of both methods essentially unchanged (\cref{fig:prediction_error_pmlb}).


\subsection*{Computation}\label{sec:computation}


One-time and per-tree jittering are computationally inexpensive; for an ensemble of trees $\mathcal{T}$, they contribute $O(nq)$ and $O(nq\lvert\mathcal{T}\rvert)$ operations, respectively. Both approaches are faster than unbiased feature importance \citep[UFI;][]{zhou2021unbiased} and orders of magnitude faster than CForest, with both gaps widening as sample size grows (\cref{tab:runtime_n}).


\section{Simulation studies}\label{sec:simulations}


We conduct two types of simulation studies. In the first, we investigate the effect of jittering and other bias-reduction methods on raw MDI scores. In the second, we combine these importance scores with integrated path stability selection (IPSS) to perform variable selection with finite-sample false discovery control \citep{melikechi2026integrated}. Both studies use a common data-generating framework with $n=200$ samples, $p=25$ continuous predictors, and $q=25$ binary predictors. In each study, we consider independent and correlated covariate structures as well as linear and nonlinear responses, yielding four simulation settings per study. Each setting consists of $100$ trials.


\subsection{Simulation design}\label{sec:simulation_design}


In both the independent and correlated designs, the binary predictors $Z_1,\ldots,Z_q$ are i.i.d.\ $\operatorname{Bernoulli}(1/2)$. In the independent design the continuous predictors $X_1,\ldots,X_p$ are i.i.d.\ $N(0,1)$, independent of the binary predictors. In the correlated design the $j$th continuous predictor is paired with the $j$th binary predictor such that
\begin{align*}
    X_j\mid Z_j=1\sim N(1,1)
    \quad\text{and}\quad
    X_j\mid Z_j=0\sim N(-1,1),
\end{align*}
with $\operatorname{Corr}(X_j,Z_j)=1/\sqrt{2}$. In the correlated design a continuous predictor can serve as a proxy for its binary partner and compete with it for splits, making the binary signal harder to recover.

In each trial, five continuous and five binary predictors are sampled uniformly without replacement to form \textit{active sets} $S_X$ and $S_Z$. The response is then generated as
\begin{align*}
Y = \sum_{j\in S_X} r_j\,\psi(X_j) + 2\sum_{k\in S_Z} r_k\,Z_k + \varepsilon,
\end{align*}
where $\varepsilon\sim N(0,\sigma^2)$, the signs $r_j,r_k$ are drawn once per trial, independent and uniform on $\{-1,1\}$, and $\psi$ is either linear, $\psi(x)=x$, or nonlinear, namely $e^{-x^2/2}$ shifted and scaled to have mean zero and unit variance whenever $X\sim N(0,1)$ (\cref{eq:exponential}). Under the independent design every term has unit variance---the factor $2$ standardizes the binary effects---and $\sigma^2$ is set in each trial so that the signal-to-noise ratio is one. Additional details about the simulation design are in \cref{supsec:sim_dgp}.


\subsection{Ranking by importance scores}\label{sec:simulation_auc}


Our first simulation study measures how well importance scores rank active predictors above inactive ones. We compare five forest-based scores: random forests (RF), one-time and per-tree jittered random forests (RF-one-time, RF-per-tree), UFI \citep{zhou2021unbiased}, and CForest \citep{hothorn2006unbiased}, and two boosting-based scores: XGBoost (XGB) and XGBoost with one-time jittering (XGB-one-time) \citep{chen2016xgboost}. Random forests are fit with \texttt{scikit-learn} \citep{pedregosa2011scikit}, UFI with its authors' implementation, CForest with the R package \texttt{partykit} \citep{hothorn2015partykit}, and XGBoost with \texttt{xgboost}. Each method is implemented with its package's default parameters. 

Ranking quality is summarized by the area under the receiver operating characteristic (ROC) curve (AUC), defined as follows. Let $I_j$ denote the importance of the $j$th predictor. Thresholding at $t$ and comparing the selected set $\{j:I_j\geq t\}$ against $S=S_X\cup S_Z$ gives
\begin{align*}
\mathrm{TPR}(t)=\frac{|\{j\in S:I_j\geq t\}|}{|S|}
\quad\text{and}\quad
\mathrm{FPR}(t)=\frac{|\{j\notin S:I_j\geq t\}|}{p+q-|S|}.
\end{align*}
The ROC curve traces $(\mathrm{FPR}(t),\mathrm{TPR}(t))$ as $t$ varies, and AUC is the area beneath it. Equivalently, AUC is the probability that a uniformly random active predictor outscores a uniformly random inactive one, with $0.5$ corresponding to random ranking and $1$ to perfect separation.

\cref{tab:study1_auc} shows the results. Per-tree and one-time jittering are the best and second best performing methods in all four settings, respectively. RF has substantially lower AUC than the other methods, regularly ranking all 25 continuous predictors above the binary ones (\cref{fig:roc_rf}). This is consistent with the asymptotic mixed data bias results in \cref{sec:theory}, despite the relatively moderate sample size, $n=200$. Jittering has little effect on XGBoost rankings: each jittered AUC is within one standard deviation of its un-jittered counterpart. In the next section, however, we find that jittering XGBoost significantly improves variable selection performance when combined with IPSS.

\begin{table*}[htbp]
\centering
\small
\setlength{\tabcolsep}{7pt}
\begin{tabular}{lcccc}
\toprule
& \multicolumn{2}{c}{Independent covariates}
& \multicolumn{2}{c}{Correlated covariates} \\
\cmidrule(lr){2-3}
\cmidrule(lr){4-5}
Method & Linear & Nonlinear & Linear & Nonlinear \\
\midrule
RF & 0.80 (0.05) & 0.80 (0.05) & 0.71 (0.05) & 0.72 (0.05) \\
RF-one-time & 0.94 (0.05) & 0.89 (0.07) & 0.86 (0.07) & 0.86 (0.06) \\
RF-per-tree & \textbf{0.96} (0.04) & \textbf{0.92} (0.05) & \textbf{0.88} (0.06) & \textbf{0.90} (0.05) \\
UFI & 0.92 (0.05) & 0.87 (0.07) & 0.83 (0.06) & 0.83 (0.07) \\
CForest & 0.92 (0.05) & 0.82 (0.07) & 0.84 (0.08) & 0.82 (0.08) \\
XGB & 0.86 (0.07) & 0.83 (0.06) & 0.76 (0.08) & 0.76 (0.08) \\
XGB-one-time & 0.84 (0.06) & 0.78 (0.08) & 0.78 (0.08) & 0.75 (0.08) \\
\bottomrule
\end{tabular}
\caption{\textit{Feature-ranking performance}. Mean AUC over $100$ trials, with the standard deviation in parentheses; the best AUC in each setting is in bold (higher is better).}
\label{tab:study1_auc}
\end{table*}


\subsection{Variable selection with false discovery control}\label{sec:simulation_ipss}


Ranking predictors by importance generally lacks a rigorous means of selecting variables. In practice, the ``top-$k$'' ranked predictors are often retained with little or no explanation of how $k$ was chosen. Variable selection methods with false discovery control offer a statistically principled alternative, enabling variables to be selected according to a user-specified error rate that is suitable for the application at hand.

In this second study, we investigate whether the benefits of adding noise extend to variable selection with false discovery control. To check this, we combine various importance scores with the nonparametric extension of IPSS \citep{melikechi2025nonparametric}, which accepts an arbitrary importance measure and produces a $q$-value, $q_j$, for each predictor, where $q_j$ is the smallest false discovery rate (FDR) at which the $j$th predictor is selected. For a nominal FDR level $\alpha$, the predictors $\{j:q_j\leq\alpha\}$ should, on average, achieve an FDR of $\alpha$ if the importance scores are well-calibrated.

We combine five importance scores with IPSS: random forest and XGBoost MDI importance, each with and without one-time jittering, and UFI. Per-tree jittering is omitted because its performance is similar to its one-time counterpart; CForest is omitted due to its computational burden and because it underperformed jittering and UFI in \cref{sec:simulation_auc}.

Performance is evaluated over an evenly spaced grid of FDR levels $\alpha\in\{0,0.01,\dots,0.50\}$. Letting $\widehat{S}(\alpha) = \{j : q_j\leq \alpha\}$ denote the selected set at level $\alpha$ and $S=S_X\cup S_Z$ the active set, the false discovery proportion (FDP) and true positive rate (TPR) at $\alpha$ are
\begin{align*}
\mathrm{FDP}(\widehat S(\alpha))=\frac{|\widehat S(\alpha)\setminus S|}{\max(|\widehat S(\alpha)|,\,1)}
\quad\text{and}\quad
\mathrm{TPR}(\widehat S(\alpha))=\frac{|\widehat S(\alpha)\cap S|}{|S|}.
\end{align*}
The empirical FDR at level $\alpha$ is then the mean of $\mathrm{FDP}(\widehat S(\alpha))$ over 100 trials.

\cref{fig:ipss_all} shows the results. IPSS with MDI from jittered XGBoost performs best overall, both controlling the FDR while yielding relatively high TPRs across most nominal FDR values. IPSS with regular XGBoost, by contrast, tends to exceed its target FDR, especially in the correlated settings. For IPSS with standard RF the selection is essentially frozen: beyond the smallest $\alpha$ levels, all five active continuous predictors are recovered, but so too are the $20$ inactive ones. No binary predictors are ever selected, consistent with the theoretical results in \cref{sec:theory}. This is why its TPR plateaus at $5/10=0.5$ and its empirical FDR plateaus at $20/25=0.8$, greatly exceeding the target FDR. IPSS with jittered RF drastically improves FDR control in all four settings, reducing false discoveries while often matching or exceeding standard RF's TPR at small-to-moderate target FDRs. UFI also controls FDR but recovers fewer active predictors than jittered RF in general.

\begin{figure}[htbp]
\centering
\includegraphics[width=\linewidth]{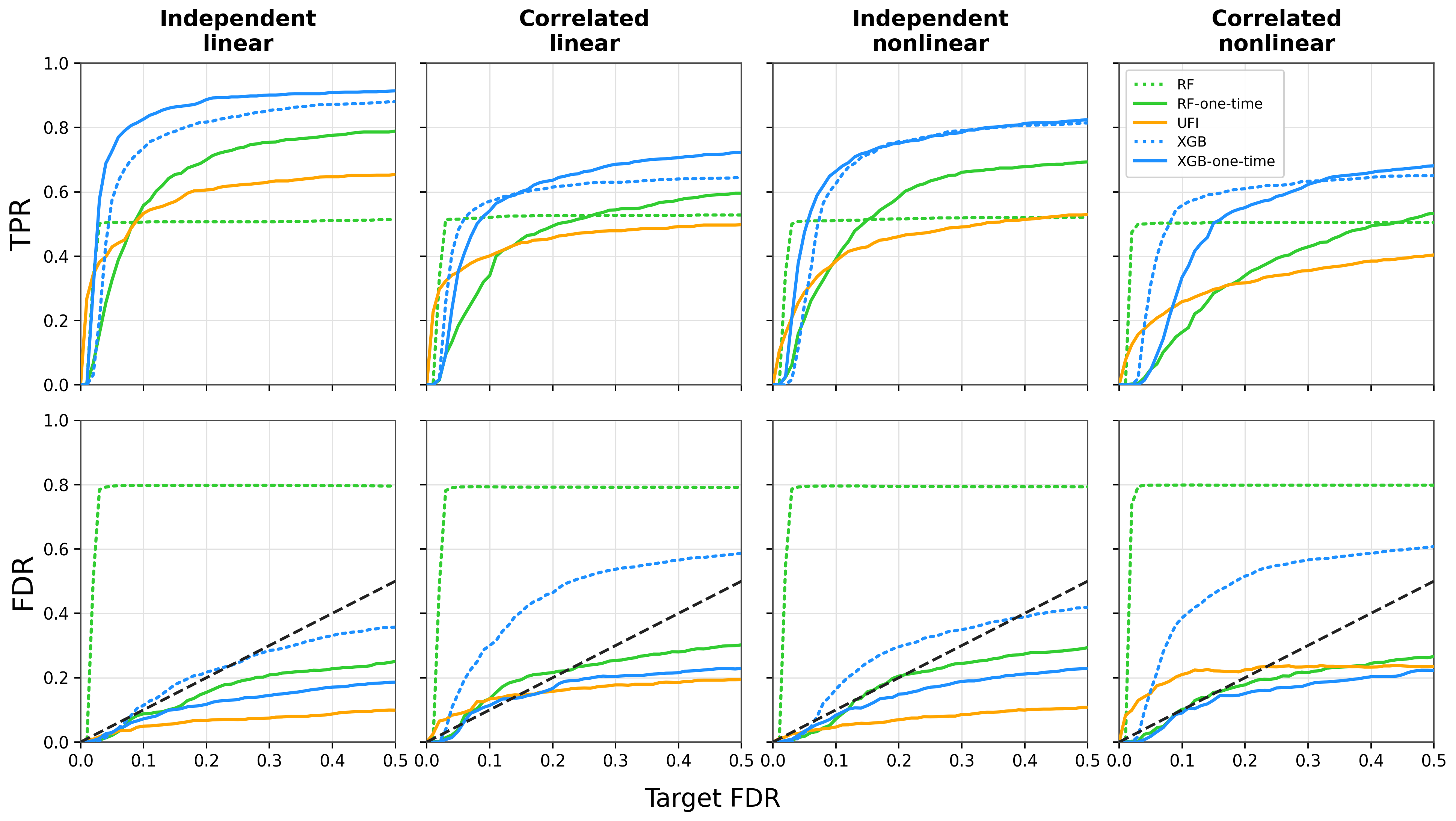}
\caption{\textit{IPSS with different MDI scores}. TPR (top row) and FDR (bottom row) against the target FDR level for IPSS with MDI scores from random forests (RF), one-time jittered random forests (RF-one-time), UFI, XGBoost (XGB), and one-time jittered XGBoost (XGB-one-time). Columns correspond to the four simulation designs. The dashed diagonal in the FDR plots marks exact FDR calibration; methods lying at or below this line successfully control FDR.}
\label{fig:ipss_all}
\end{figure}


\section{Applications}\label{sec:realdata}


In \cref{sec:bladder}, we investigate the effect of jittering on a bladder cancer dataset from The Cancer Genome Atlas \citep[TCGA;][]{weinstein2013cancer}. This is the application that motivated this work: while applying random forests to these data and other datasets like it, we observed that pathologic stage, a known categorical predictor of survival, was often given lower importance than dozens or even hundreds of other features. In \cref{sec:ml_applications}, we report the effect of jittering on seven additional datasets spanning multiple application areas.


\subsection{Bladder cancer}\label{sec:bladder}


We consider the problem of predicting 18-month survival after a diagnosis of bladder urothelial carcinoma using microRNA expression and clinical data, obtained from TCGA via LinkedOmics \citep{vasaikar2018linkedomics}. After cleaning (\cref{supsec:blca}), the data comprise $n=306$ patients and $425$ predictors: $423$ continuous microRNA measurements, patient age, and a categorical predictor, pathologic stage, recorded at three ordered levels (II, III, IV).

Pathologic stage, or just \textit{stage}, measures how much cancer is in the body and how much it has spread. It is an established prognostic factor in bladder cancer and oncology more broadly \citep{shariat2007discrepancy}. Thus, there is strong prior evidence that a reliable importance measure should rank stage highly in these data. To test this, we fit the five importance methods of \cref{sec:simulation_auc} over $50$ random seeds and recorded where stage ranks among the $425$ predictors (\cref{fig:bladder_boxplots}). One-time and per-tree jittering rank it first under nearly every seed, with a median rank of $1$ and an interquartile range (IQR) of $[1,1]$. UFI also ranks stage highly but is more variable than both jittering methods, with median rank $2.5$ and IQR $[1,8]$. The other two methods are far less consistent: RF gives stage a median rank of $36$ with IQR $[15,67]$, and CForest gives a median of $40.5$ with IQR $[25,60]$.


\begin{figure}[htbp]
\centering
\includegraphics[width=\linewidth]{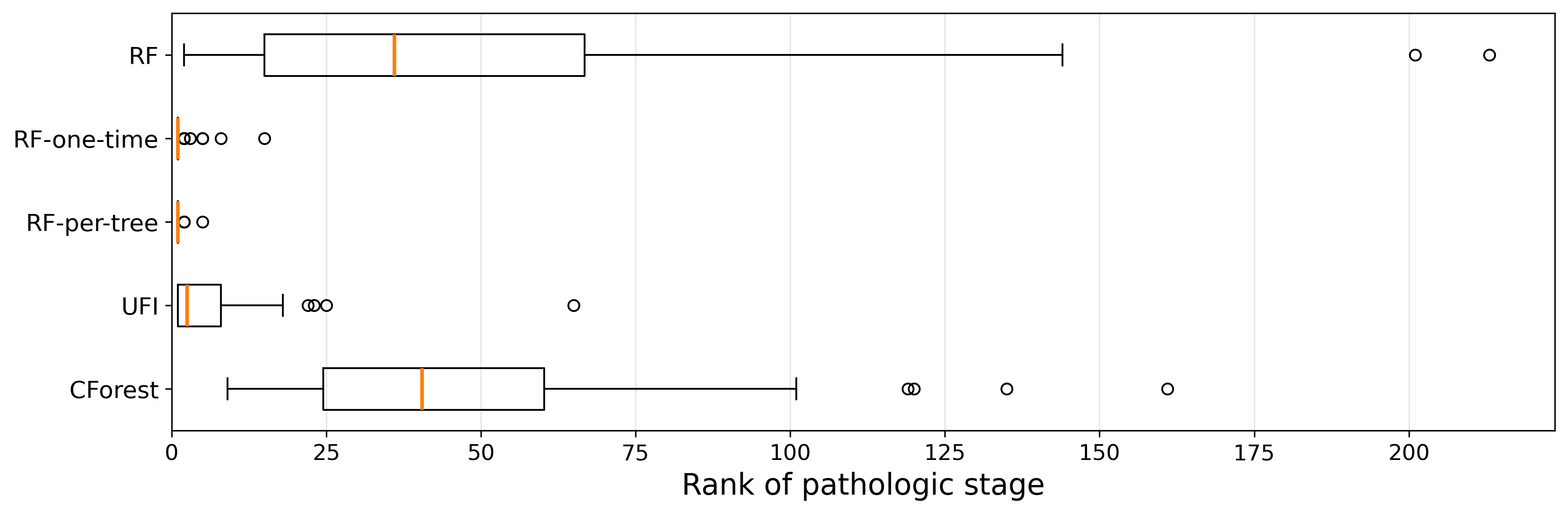}
\caption{\textit{Rank of pathologic stage over 50 random seeds}. Rank of pathologic stage among the $425$ bladder cancer predictors of 18-month survival over $50$ random seeds.}
\label{fig:bladder_boxplots}
\end{figure}


\subsection{Machine learning datasets}\label{sec:ml_applications}


To assess the effect of jittering more broadly, we analyze seven datasets spanning multiple domains, including manufacturing, medicine, finance, and chemistry. Dataset names, sample sizes, and numbers of continuous and categorical predictors are listed in the first four columns of \cref{tab:ipss_null_spikein}. All data are publicly available and can be downloaded from OpenML \citep{vanschoren2014openml} or the Penn Machine Learning Benchmark \citep{Olson2017PMLB}. 

We run two analyses on each dataset. In the first, we add a synthetic null copy of each observed predictor by randomly permuting the latter's rows. This creates a set of variables known to be irrelevant, providing a way to evaluate selection performance on data for which the ground truth is unknown. The second analysis applies IPSS to the original data, as one would in practice, and asks whether the selected predictors are supported by prior evidence.

\textit{Synthetic nulls}. We add one synthetic null predictor per observed predictor by randomly permuting its rows, thereby doubling the dimension of each dataset. The real predictor and its permuted copy have identical marginal distributions, but the latter's association with the response is removed, at least up to sampling variability. \cref{tab:ipss_null_spikein} reports the average number of synthetic nulls selected by IPSS with RF, RF-one-time, XGBoost, and XGBoost-one-time over $10$ trials, where each trial introduces a new synthetic copy of each predictor. The target FDR for IPSS is set to $0.1$ throughout.

IPSS with MDI from RF selects a substantial proportion of synthetic nulls in most cases; IPSS with XGBoost selects far fewer synthetic nulls than RF, but still identifies a nonegligible number on certain data, most notably Hypothyroid. In both cases, most of the erroneously selected synthetic null predictors are continuous. By contrast, IPSS with jittered RF selects almost no synthetic nulls across all datasets. Similar results hold when ranking predictors by raw importance scores, with RF ranking continuous synthetic nulls highly in several of these datasets (\cref{fig:rank_null_spikein}).

\textit{Domain analyses}. IPSS with RF favors continuous and high-cardinality predictors (\cref{fig:ipss_selection_cardinality}). IPSS with jittered RF and UFI instead recovers low-cardinality categorical predictors that RF misses (\cref{supsec:ml}), such as process controls in Cylinder Banding and family history in SAheart, both well-established predictors in their fields. XGBoost and jittered XGBoost also recover many of these predictors, leaving RF as the lone outlier.

\begin{table}[htbp]
\centering
\small
\setlength{\tabcolsep}{6pt}
\renewcommand{\arraystretch}{1.05}
\begin{tabular}{lccccccc}
\toprule
Dataset & $n$ & $p$ & $q$ & RF & RF-one-time & XGB & XGB-one-time \\
\midrule
QSAR Biodegradation & 1055 & 19 & 22 & 14.7 (14.7) & 0.0 (0.0) & 0.6 (0.5) & 0.2 (0.1) \\
Cylinder Banding & 540 & 13 & 22 & 10.7 (9.5) & 0.0 (0.0) & 0.0 (0.0) & 0.0 (0.0) \\
Hypothyroid & 3163 & 7 & 18 & 9.8 (7.0) & 0.2 (0.0) & 2.7 (2.7) & 0.3 (0.0) \\
German Credit & 1000 & 3 & 17 & 6.7 (3.0) & 0.0 (0.0) & 0.5 (0.4) & 0.0 (0.0) \\
Hepatitis & 155 & 6 & 13 & 6.0 (6.0) & 0.0 (0.0) & 0.1 (0.1) & 0.0 (0.0) \\
SAheart & 462 & 8 & 1 & 0.0 (0.0) & 0.0 (0.0) & 0.0 (0.0) & 0.0 (0.0) \\
Titanic & 2207 & 2 & 6 & 4.0 (2.0) & 0.0 (0.0) & 0.0 (0.0) & 0.0 (0.0) \\
\bottomrule
\end{tabular}
\caption{\textit{Number of synthetic null predictors selected}. For each dataset ($n$ samples, $p$ continuous and $q$ categorical predictors), the mean number of synthetic null predictors selected by IPSS with each base selector at target FDR $\alpha=0.1$ over $10$ draws, with the number that are continuous in parentheses.}
\label{tab:ipss_null_spikein}
\end{table}


\section{Discussion}\label{sec:discussion}


MDI scores in mixed data are heavily influenced by the number of unique values each predictor can take. We provide a theoretical analysis of this asymmetry in CART and CART ensembles and explain its preference for continuous and high-cardinality predictors. Although the theory here does not cover additional components of tree ensembles, such as observation resampling and feature subsampling, extensive synthetic and real-data experiments show that the same phenomenon persists in practice. One-time jittering equalizes split opportunities through a simple preprocessing step, offering a simple, computationally efficient means of improving importance rankings and variable selection with false discovery control without compromising predictive accuracy.


\subsection*{Code and data availability}


Code to reproduce the results in this paper, including tables and figures, is available at \url{https://github.com/melikechi-lab/just-add-noise}. All datasets used in this work are public: the bladder cancer cohort is downloaded from LinkedOmics \citep{vasaikar2018linkedomics}, and the remaining datasets are downloaded from OpenML \citep{vanschoren2014openml} or the Penn Machine Learning Benchmark \citep{Olson2017PMLB}.

\putbib[reference]
\end{bibunit}

\clearpage

\title{{\Large S}{\large upplementary material}}
\date{}
\maketitle

\setcounter{page}{1}
\setcounter{section}{0}
\setcounter{table}{0}
\setcounter{figure}{0}
\setcounter{equation}{0}
\setcounter{algorithm}{0}

\renewcommand{\theHsection}{Ssection.\arabic{section}}
\renewcommand{\theHtable}{Stable.\arabic{table}}
\renewcommand{\theHfigure}{Sfigure.\arabic{figure}}
\renewcommand{\theHequation}{Sequation.\arabic{equation}}
\renewcommand{\thepage}{S\arabic{page}}
\renewcommand{\thesection}{S\arabic{section}}
\renewcommand{\thetable}{S\arabic{table}}
\renewcommand{\thefigure}{S\arabic{figure}}
\renewcommand{\theequation}{S\arabic{equation}}
\renewcommand{\thealgorithm}{S\arabic{algorithm}}

\begin{bibunit}[ims]


\section{Proofs for the null model}\label{supsec:null_proofs}



\subsection{Additional setup and notation}\label{supsec:null_setup}


In the proofs of \cref{thrm:null_root,thrm:tree_null}, replacing $Y$ by $(Y-\mathbb{E}(Y))/\sqrt{\operatorname{Var}(Y)}$ preserves split comparisons and normalized importances. We therefore assume without loss of generality that $\mathbb{E}(Y)=0$ and $\operatorname{Var}(Y)=1$. A direct calculation using \eqref{eq:variance_impurity} gives, for any proper subset $A$ of a node $t$,
\begin{equation}
\Delta \mathcal{I}(t, A)
= \frac{1}{n_t}\,\varphi_{t}(A) - \frac{S_t^2}{n_t^2},
\quad\text{where}\quad
\varphi_{t}(A)
=
\frac{S_A^2}{|A|} + \frac{S_{t \setminus A}^2}{|t \setminus A|}.
\label{eq:varphi_identity}
\end{equation}
Since $n_t$ and $S_t$ do not depend on $A$, maximizing $\Delta \mathcal{I}(t, A)$ is equivalent to maximizing $\varphi_{t}(A)$. The quantity $\varphi_t$ is the central object in what follows, both here and in \cref{supsec:nonnull}. When $t = [n]$ is the root, we drop subscripts and write $\varphi_n = \varphi_{[n]}$, $S_n = S_{[n]}$, and so on.


\subsection{Proof of \cref{thrm:null_root}}\label{supsec:null_root}


The proof of \cref{thrm:null_root} reduces to the case $p = q = 1$, which we isolate as a lemma below.

\begin{lemma}\label{lem:pq1}
Suppose $p = 1$, $q = 1$ and that $Y$ is independent of $X = X_1$ and $Z = Z_1$. Then
\begin{align*}
\lim_{n\to\infty} \mathbb{P}\bigl(M_n(X) > M_n(Z)\bigr) = 1.
\end{align*}
\end{lemma}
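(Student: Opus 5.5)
The plan is to compare the two maximal gains on the scale $n\,\Delta\mathcal{I}$. We will show that $n M_n(Z)$ is $O_p(1)$ while $n M_n(X)\to\infty$ in probability. As in \cref{supsec:null_setup}, assume $\mathbb{E}(Y)=0$ and $\operatorname{Var}(Y)=1$. From \cref{eq:varphi_identity}, for any proper subset $A\subset[n]$ with $|A|=m$,
\begin{align*}
n\,\Delta\mathcal{I}([n],A)=\varphi_n(A)-\frac{S_n^2}{n}=\frac{n}{m(n-m)}\Bigl(S_A-\frac{m}{n}S_n\Bigr)^2 .
\end{align*}
This is a standardized two-sample statistic. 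Only the pairwise independences $Y\perp X$ and $Y\perp Z$ will be used.

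\emph{Categorical side.} Because the pairs $(z_i,y_i)$ are i.i.d.\ with $Y\perp Z$, the $y_i$ are i.i.d.\ with the law of $Y$ conditionally on $(z_1,\dots,z_n)$. The set $\mathcal{B}_n(Z)$ is finite, with at most $2^{|\mathcal{Z}_1|-1}-1$ elements. For each fixed $B$, conditionally on the $z$'s the quantity $(S_A-\tfrac{m}{n}S_n)^2\,n/(m(n-m))$, with $A=L^{[n]}_B(Z)$, has conditional expectation exactly $1$. This follows from a direct variance computation for i.i.d.\ mean-zero, unit-variance summands. By Markov's inequality each of these finitely many terms is $O_p(1)$, so $nM_n(Z)=O_p(1)$.

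\emph{Continuous side.} Let $\sigma$ be the permutation that sorts $x_1,\dots,x_n$; it is a.s.\ well defined by absolute continuity. Since the pairs $(x_i,y_i)$ are i.i.d.\ with $X\perp Y$, conditionally on the $x$'s the vector $(y_{\sigma(1)},\dots,y_{\sigma(n)})$ is i.i.d.\ with the law of $Y$. Hence the partial sums $T_\ell=\sum_{r\le \ell}y_{\sigma(r)}$ along the $X$-ordering form a random walk with mean-zero, unit-variance steps. Using $n/(\ell(n-\ell))\ge 1/\ell$, we get
\begin{align*}
nM_n(X)\ \ge\ \max_{1\le \ell\le \lfloor\sqrt n\rfloor}\frac{1}{\ell}\Bigl(T_\ell-\frac{\ell}{n}T_n\Bigr)^2 .
\end{align*}
For $\ell\le\sqrt n$ the correction term satisfies $\sqrt{\ell}\,|T_n|/n\le n^{-1/4}\,|T_n|/\sqrt n=O_p(n^{-1/4})$, so it is negligible. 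It therefore suffices to show that $\max_{\ell\le \sqrt n}T_\ell^2/\ell\to\infty$ in probability.

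Because the law of $(T_\ell)_{\ell\le \sqrt n}$ is that of the first $\lfloor\sqrt n\rfloor$ steps of a fixed i.i.d.\ random walk $(W_\ell)$, the Hartman--Wintner law of the iterated logarithm gives $\sup_{\ell\ge1}W_\ell^2/\ell=\infty$ almost surely. Consequently, for each $K$ we have $\mathbb{P}(\max_{\ell\le m}W_\ell^2/\ell>K)\to1$ as $m\to\infty$. A weaker tool would also do: the CLT along a sparse subsequence of $\ell$'s suffices to get divergence of the maximum.

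\emph{Conclusion.} For any $K$,
\begin{align*}
\mathbb{P}\bigl(M_n(X)\le M_n(Z)\bigr)\le \mathbb{P}\bigl(nM_n(Z)\ge K\bigr)+\mathbb{P}\bigl(nM_n(X)<K\bigr).
\end{align*}
Letting $n\to\infty$ and then $K\to\infty$ gives the lemma. The main obstacle I expect is the continuous side. One difficulty is justifying cleanly that reordering by $X$ yields an exact i.i.d.\ sequence, which needs only $X\perp Y$ and a.s.\ distinct $x$'s. The other is controlling the centering term $\ell T_n/n$ uniformly, which is why I restrict to $\ell\le\sqrt n$ rather than invoking a full Darling--Erdős result.
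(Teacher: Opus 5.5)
Your proposal is correct and takes essentially the paper's approach. On the categorical side you bound the conditional expectation of each of the finitely many split gains and apply Markov. On the continuous side you use that the $X$-sorted responses form an i.i.d.\ random walk whose $\max_\ell S_\ell^2/\ell$ diverges by the law of the iterated logarithm. The one difference is in handling the centering: the paper drops the nonnegative term $S_{[n]\setminus A}^2/(n-|A|)$ from $\varphi_n(A)$ and subtracts the single $O_p(1)$ quantity $S_n^2/n$. This lets it take the maximum over all $\ell<n$ and avoids your restriction to $\ell\le\sqrt n$.
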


\begin{proof}[Proof]
By the standardization convention above, $\mathbb{E}(Y)=0$ and $\mathbb{E}(Y^2)=1$. For the continuous predictor, \eqref{eq:varphi_identity} gives
\begin{align*}
\zeta_n
&:=
\max_{1\leq \ell<n}\frac{S_{L_\ell(X)}^2}{\ell},
\\
nM_n(X)
&=
\max_{1\leq \ell<n}\varphi_n\bigl(L_\ell(X)\bigr)
-
\frac{S_n^2}{n}
\geq
\zeta_n-\frac{S_n^2}{n}.
\end{align*}
Let $\sigma$ be the permutation of $[n]$ induced by sorting the $x_i$'s in increasing order. $Y\perp X$ implies $Y\perp\sigma$; consequently $(y_{\sigma(1)},\ldots,y_{\sigma(n)})$ is distributed as $(y_1,\ldots,y_n)$. Writing $\tilde S_\ell = \sum_{i=1}^\ell y_{\sigma(i)}$, we have $S_{L_\ell(X)} = \tilde S_\ell$, so $\zeta_n\stackrel{d}{=}\xi_n := \max_{1\leq k<n}S_k^2/k$, where $S_k = \sum_{i=1}^k Y_i$.
Since $S_n^2/n \leq \xi_{n+1}$ for every $n$, the law of the iterated logarithm applied to $\{Y_i\}$ gives
\begin{equation}
\limsup_{n\to\infty}\frac{\xi_{n+1}}{2\log\log n}
\geq
\limsup_{n\to\infty}\frac{S_n^2}{2n\log\log n}
= 1 \quad \text{almost surely.}
\label{eq:lil}
\end{equation}
Since $\log\log n \to \infty$ and $\xi_n$ is nondecreasing, \eqref{eq:lil} gives $\xi_n \to \infty$ almost surely. Moreover, $\mathbb{E}[S_n^2/n]=1$, so $S_n^2/n=O_p(1)$. Thus $nM_n(X)\overset{p}{\longrightarrow}\infty$.

For the categorical predictor, let $\mathcal{B}_n(Z)=\{B\in\mathcal{P}^\ast(\mathcal{Z}):0<|L_B(Z)|<n\}$, with the maximum over $\mathcal{B}_n(Z)$ defined to be zero if $\mathcal{B}_n(Z)$ is empty. Conditional on $z_1,\ldots,z_n$, and for any $B\in\mathcal{B}_n(Z)$, let $n_B=|L_B(Z)|$ and $n_B^c=n-n_B$.
Since $Y\perp Z$, $L_B(Z)$ is fixed conditionally on $z_1,\ldots,z_n$, and
\begin{align*}
\mathbb{E}\left[
\frac{S_{L_B(Z)}^2}{n_B}
\middle|
z_1,\ldots,z_n
\right]
=1,
\qquad
\mathbb{E}\left[
\frac{S_{L_B(Z)^c}^2}{n_B^c}
\middle|
z_1,\ldots,z_n
\right]
=1.
\end{align*}
Hence $\mathbb{E}\left[\varphi_n(L_B(Z))\middle|z_1,\ldots,z_n\right]=2$.
Complementary category subsets give the same split and the same value of $\varphi_n$, so after retaining one representative from each pair there are at most $2^{|\mathcal{Z}|-1}-1$ valid categorical splits. Therefore,
\begin{align*}
\mathbb{E}\left[
\max_{B\in\mathcal{B}_n(Z)}
\varphi_n(L_B(Z))
\middle|
z_1,\ldots,z_n
\right]
\le
2\bigl(2^{|\mathcal{Z}|-1}-1\bigr).
\end{align*}
Markov's inequality gives $\max_{B\in\mathcal{B}_n(Z)}\varphi_n(L_B(Z))=O_p(1)$.
By \eqref{eq:varphi_identity},
\begin{align*}
0\leq nM_n(Z)
\leq
\max_{B\in\mathcal{B}_n(Z)}
\varphi_n\bigl(L_B(Z)\bigr)
=O_p(1).
\end{align*}
Since $nM_n(X)\overset{p}{\longrightarrow}\infty$ and $nM_n(Z)=O_p(1)$, the result follows.
\end{proof}

\begin{proof}[Proof of \cref{thrm:null_root}]
For any $k\in[q]$, \cref{lem:pq1} applied to the pair $(X_1,Z_k)$ gives $\mathbb{P}\{M_n(X_1)\leq M_n(Z_k)\}\to0$. Hence
\begin{align*}
&\mathbb P\!\left(
\max_{j\in[p]}M_n(X_j)
\leq
\max_{k\in[q]}M_n(Z_k)
\right)
\leq
\sum_{k=1}^q
\mathbb P\!\left(M_n(X_1)\leq M_n(Z_k)\right)
\overset{p}{\longrightarrow} 0. \qedhere
\end{align*}
\end{proof}


\subsection{Proof of \cref{thrm:tree_null}}\label{supsec:tree_null}


We assume the joint null model throughout this section.

Let $\mathcal{R}$ be the collection of sets
\begin{align*}
R=
\left(\prod_{j=1}^p(a_j,b_j]\right)\times B,
\qquad
-\infty\leq a_j<b_j\leq\infty,
\qquad
B\subseteq\mathcal{Z},
\end{align*}
together with the empty set. This collection contains all CART node regions and candidate split regions and is closed under intersections. For $A\in\mathcal{R}$, define
\begin{align*}
P(A)
&=\mathbb{P}\bigl((X,Z)\in A\bigr),
\\
P_n(A)
&=\frac1n\sum_{i=1}^n
\mathbf{1}\bigl((x_i,z_i)\in A\bigr),
\\
H_n(A)
&=\frac1{\sqrt n}\sum_{i=1}^n
y_i\mathbf{1}\bigl((x_i,z_i)\in A\bigr).
\end{align*}
Only splits with two nonempty children are considered, and a maximum over an empty collection of candidate splits is defined to be zero.

\begin{lemma}[Uniform control over node regions]
\label{lem:tree_uniform}
Under the joint null model,
\begin{align}
\sup_{R\in\mathcal{R}}|P_n(R)-P(R)|
&\overset{p}{\longrightarrow}0,
\label{eq:tree_uniform_counts}
\\
\sup_{R\in\mathcal{R}}|H_n(R)|
&=O_p(1).
\label{eq:tree_uniform_sums}
\end{align}
Moreover, for any data-dependent sequence $R_n\in\mathcal{R}$ such that $P_n(R_n)\overset{p}{\longrightarrow}1$,
\begin{align}
\sup_{A\in\mathcal{R}}
\bigl|H_n(R_n\cap A)-H_n(A)\bigr|
\overset{p}{\longrightarrow}0.
\label{eq:tree_negligible_removal}
\end{align}
\end{lemma}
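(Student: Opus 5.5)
Since $\mathcal{Z}$ is finite, each $R\in\mathcal{R}$ is a union over $z\in\mathcal{Z}$ of sets $\prod_j(a_j,b_j]\times\{z\}$. It therefore suffices to prove all three claims for the class $\mathcal{R}_0$ of sets of the form $\prod_{j=1}^p(a_j,b_j]\times\{z\}$, with a fixed $z$. Both $P_n(R)-P(R)$ and $H_n(R)$ are additive over disjoint pieces, so the supremum over $\mathcal{R}$ is at most $|\mathcal{Z}|$ times the supremum over $\mathcal{R}_0$.

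\emph{Step 1 (\eqref{eq:tree_uniform_counts}).} Boxes $\prod_j(a_j,b_j]$ in $\mathbb{R}^p$ form a Vapnik--Chervonenkis class of dimension $2p$. Intersecting with the fixed set $\{Z=z\}$ keeps the class VC. The Glivenko--Cantelli theorem for VC classes then gives $\sup_{R\in\mathcal{R}_0}|P_n(R)-P(R)|\to0$ almost surely.

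\emph{Step 2 (\eqref{eq:tree_uniform_sums}).} Under the joint null, $y_i$ is independent of $w_i=(x_i,z_i)$, with $\mathbb{E}y_i=0$ and $\mathbb{E}y_i^2=1$. Consider the function class $\mathcal{F}=\{(w,y)\mapsto y\mathbf{1}(w\in R):R\in\mathcal{R}_0\}$. It is the product of the fixed function $y$ with a VC class of indicators, so it is VC-subgraph with square-integrable envelope $|y|$. Each member also has mean zero, since $\mathbb{E}[y\mathbf{1}(w\in R)]=\mathbb{E}y\,P(R)=0$. A uniform entropy bound then shows $\mathcal{F}$ is $P$-Donsker, so $\sup_R|H_n(R)|=\sup_{f\in\mathcal{F}}|\mathbb{G}_nf|$ converges weakly to the supremum of a tight Gaussian process and is $O_p(1)$. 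As an alternative, one can condition on the $w_i$: $H_n$ is then a weighted sum of independent mean-zero $y$'s, and a maximal inequality over the at most $O(n^{2p})$ distinct traces gives a bound. That route picks up a $\log n$ factor, though, so the Donsker route is the one to use.

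\emph{Step 3 (\eqref{eq:tree_negligible_removal}).} This is the main obstacle, because $R_n$ is data dependent. Write $H_n(A)-H_n(R_n\cap A)=H_n(A\setminus R_n)$. The difference $A\setminus R_n$ is not a box, but it is a finite union of at most $2p+1$ disjoint sets from $\mathcal{R}$ (the standard box-difference decomposition), each contained in $R_n^c$ and hence of empirical mass at most $1-P_n(R_n)\to0$. By Step 1, $P(\,\cdot\,)$ of each such set is then also $o_p(1)$ uniformly. It therefore suffices to show
\begin{align*}
\sup\{|H_n(C)|: C\in\mathcal{R},\ P(C)\le\eta_n\}\overset{p}{\longrightarrow}0 \quad\text{whenever } \eta_n\to0.
\end{align*}
This is asymptotic equicontinuity of the Donsker class from Step 2 in the $L^2$ seminorm. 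Here $\|f_C\|_{L^2}^2=\mathbb{E}[y^2\mathbf{1}(w\in C)]=P(C)$ by independence, and the zero function corresponds to $C=\emptyset$, so small $P(C)$ means $f_C$ is $L^2$-close to $0$. Asymptotic equicontinuity then yields the display, and summing over the $2p+1$ pieces gives \eqref{eq:tree_negligible_removal}.
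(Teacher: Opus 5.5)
Your proof is correct and follows the paper's route: Vapnik--Chervonenkis/Glivenko--Cantelli for the counts, a uniform-entropy Donsker argument for $H_n$ with mean-zero members under the joint null, and asymptotic equicontinuity of that Donsker class to handle the data-dependent $R_n$. The one difference is in the last step: the paper applies equicontinuity directly to the pairs $f_{R_n\cap A}, f_A$, whose squared $L^2(P)$ distance is $P(A\setminus R_n)\le P(R_n^c)$, so your $2p+1$-box decomposition of $A\setminus R_n$ and the initial reduction to $\mathcal{R}_0$ are harmless but unnecessary.
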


\begin{proof}[Proof]
Because $p$ and $\mathcal{Z}$ are fixed, $\mathcal{R}$ is a Vapnik--Chervonenkis class. The uniform law of large numbers therefore gives \eqref{eq:tree_uniform_counts}. Consider the weighted indicator class
\begin{align*}
\mathcal{F}
=
\left\{
(x,z,y)\mapsto y\mathbf{1}\bigl((x,z)\in R\bigr):
R\in\mathcal{R}
\right\}.
\end{align*}
This class inherits a polynomial uniform covering-number bound from $\mathcal{R}$ and has square-integrable envelope $|Y|$. It is therefore Donsker by the uniform entropy theorem \citep{vandervaart1996weak}. Under the joint null, $\mathbb{E}\left[Y\mathbf{1}\bigl((X,Z)\in R\bigr)\right]=0$ for every $R\in\mathcal{R}$.
Thus $H_n$ is the centered empirical process indexed by $\mathcal{F}$, and its asymptotic tightness gives \eqref{eq:tree_uniform_sums}.

Now suppose $P_n(R_n)\overset{p}{\longrightarrow}1$. By \eqref{eq:tree_uniform_counts}, $P(R_n^c)\overset{p}{\longrightarrow}0$. For deterministic $R,A\in\mathcal{R}$, independence of $Y$ and $(X,Z)$ gives
\begin{align*}
\mathbb{E}\left[
\left\{
Y\mathbf{1}\bigl((X,Z)\in R\cap A\bigr)
-
Y\mathbf{1}\bigl((X,Z)\in A\bigr)
\right\}^2
\right]
&=P(A\setminus R)
\leq P(R^c).
\end{align*}
Fix $\varepsilon>0$. On the event $\{P(R_n^c)<\varepsilon^2\}$, every pair $f_{R_n\cap A},f_A$ with $A\in\mathcal{R}$ has $L_2(P)$-distance at most $\varepsilon$, so the left-hand side of \eqref{eq:tree_negligible_removal} is bounded by the oscillation of the empirical process over $\varepsilon$-balls. Asymptotic equicontinuity of the Donsker process, together with $P(R_n^c)\overset{p}{\longrightarrow}0$, then gives \eqref{eq:tree_negligible_removal} on letting $\varepsilon\to0$. In particular, $R_n$ may depend on the responses; no conditional independence within the selected region is required.
\end{proof}

\begin{lemma}[Nodewise dominance and unbalanced splits]
\label{lem:tree_nodewise}
Let $R_n\in\mathcal{R}$ be data-dependent and define $t=\bigl\{i\in[n]:(x_i,z_i)\in R_n\bigr\}$.
Suppose $n_t/n\overset{p}{\longrightarrow}1$. Then
\begin{align}
nM_t(X_1)
&\overset{p}{\longrightarrow}\infty,
\label{eq:tree_cont_divergence}
\\
n\max_{k\in[q]}M_t(Z_k)
&=O_p(1).
\label{eq:tree_cat_tightness}
\end{align}
Moreover, for every $\eta\in(0,1/2)$,
\begin{align}
n\max_{\substack{
t_L\text{ a candidate child of }t\\
\eta n_t\leq |t_L|\leq(1-\eta)n_t
}}
\Delta\mathcal{I}(t,t_L)
=O_p(1).
\label{eq:tree_balanced_tightness}
\end{align}
Consequently, a maximizing CART split of $t$ is continuous with probability tending to one, and its children satisfy
\begin{align*}
\frac{\min\{|t_L|,|t_R|\}}{n_t}
\overset{p}{\longrightarrow}0.
\end{align*}
\end{lemma}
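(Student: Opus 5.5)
Everything will be expressed in terms of $\varphi_t$ through \eqref{eq:varphi_identity} and the empirical-process quantities of \cref{lem:tree_uniform}. For any candidate child $A\subseteq t$ with region $R_n\cap A'$ ($A'\in\mathcal R$), we have $S_A=\sqrt n\,H_n(R_n\cap A')$. Hence
\begin{align*}
n\,\Delta\mathcal I(t,A)=\frac{n}{n_t}\Bigl(\frac{S_A^2}{|A|}+\frac{S_{t\setminus A}^2}{n_t-|A|}-\frac{S_t^2}{n_t}\Bigr).
\end{align*}
Since $n_t/n\to1$ in probability, the prefactor $n/n_t$ is $O_p(1)$. The term $S_t^2/n_t=n\,H_n(R_n)^2/n_t$ is $O_p(1)$ by \eqref{eq:tree_uniform_sums}.

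\emph{Tightness of balanced and categorical splits.} For \eqref{eq:tree_balanced_tightness}, when $\eta n_t\le|A|\le(1-\eta)n_t$ each term $S_A^2/|A|$ is at most
\begin{align*}
\frac{n\sup_R H_n(R)^2}{\eta n_t}=O_p(1),
\end{align*}
uniformly over $A$, and the same holds for the complement. This gives \eqref{eq:tree_balanced_tightness}. For \eqref{eq:tree_cat_tightness}, categorical children have regions $R_n\cap(\mathbb R^p\times\{z:z_k\in B\})$. There are finitely many $B$, but the child sizes need not be balanced, so I would not reuse the balanced bound. I plan two routes.
\begin{itemize}
\item[(i)] Use \eqref{eq:tree_uniform_counts}. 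Each child has $|A|/n\to P(z_k\in B)$ in probability, because $P_n(R_n\cap A')-P_n(A')\to0$. If that probability is positive, the child is asymptotically balanced and the bound above applies. If it is zero, the category is absent almost surely and the split is invalid.
\item[(ii)] Alternatively, use \eqref{eq:tree_negligible_removal} directly. It gives $S_A/\sqrt n=H_n(A')+o_p(1)$ with $A'$ a fixed set, whose $|A'|$-normalized square is tight by the conditional-expectation argument of \cref{lem:pq1}.
\end{itemize}
Route (i) is simpler and I would take it.

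\emph{Divergence of continuous splits.} This is the main obstacle for \eqref{eq:tree_cont_divergence}, because $t$ depends on the responses and the permutation argument of \cref{lem:pq1} does not apply inside $t$. I would lower bound $nM_t(X_1)$ by splits of the form $L=\{i\in t:x_{i1}\le s\}$ with $s$ small, comparing them with root splits. Write $A_s=(-\infty,s]\times\mathbb R^{p-1}\times\mathcal Z$.

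The plan has three steps.
\begin{itemize}
\item[1.] From the proof of \cref{lem:pq1}, the root quantity $\max_\ell S^2_{L_\ell(X_1)}/\ell$ diverges. Moreover, by the LIL applied near the left tail, for every fixed $m$ the maximum over $\ell\le \epsilon n$ still diverges. Each root half-line split is the same kind of partial sum, so this should carry over.
\item[2.] Sandwich. The difference between the root child and the $t$-child is $\sqrt n\,[H_n(A_s)-H_n(R_n\cap A_s)]$. This is $o_p(\sqrt n)$ by \eqref{eq:tree_negligible_removal}, but that is not small enough relative to $\sqrt{\ell}$ when $\ell\ll n$.
\item[3.] Instead I would use balanced-scale splits. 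By \eqref{eq:tree_negligible_removal}, for splits of size proportional to $n$ the $t$-based value $S_A^2/|A|$ equals the root value up to $o_p(1)$. It therefore suffices that the root maximum over splits with $\ell\in[\epsilon n,(1-\epsilon)n]$ exceeds any constant $K$ with probability $\to1$ as $\epsilon\to0$ after $n\to\infty$. By Donsker's theorem this maximum converges weakly to $\sup_{u\in[\epsilon,1-\epsilon]}B(u)^2/u$ for a Brownian motion $B$, which is almost surely unbounded as $\epsilon\to0$.
\end{itemize}
A diagonal argument then yields $nM_t(X_1)\to\infty$ in probability.

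\emph{Consequences.} Comparing \eqref{eq:tree_cont_divergence} with \eqref{eq:tree_cat_tightness} shows the maximizing split is continuous with probability $\to1$. Comparing it with \eqref{eq:tree_balanced_tightness} for each fixed $\eta$ shows the maximizing child is not $\eta$-balanced with probability $\to1$. This gives $\min\{|t_L|,|t_R|\}/n_t\to0$ in probability.
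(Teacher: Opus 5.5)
Your proposal is correct and follows essentially the same route as the paper. Both split the argument into categorical, balanced and continuous gains, use \eqref{eq:tree_uniform_sums} for tightness, use \eqref{eq:tree_negligible_removal} to transfer root threshold splits of order-$n$ size to the node $t$, and use the law of the iterated logarithm at zero to force divergence. The only difference is in the continuous step: you lower-bound by the one-sided term $S_A^2/|A|$ and apply Donsker's theorem on $[\epsilon,1-\epsilon]$ to obtain $\sup_u W(u)^2/u$ for a Brownian motion $W$, whereas the paper keeps the exact centered gain \eqref{eq:tree_scaled_gain} and applies a finite-dimensional CLT at fixed quantiles to obtain the Brownian-bridge limit $\mathbb{B}(u)^2/\{u(1-u)\}$.
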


\begin{proof}[Proof]
For a candidate split region $A$, write $t_A=\bigl\{i\in t:(x_i,z_i)\in A\bigr\}$, $\alpha_n=P_n(R_n)=n_t/n$, and $\beta_n(A)=P_n(R_n\cap A)=|t_A|/n$.
By \eqref{eq:varphi_identity}, whenever $t_A$ is a proper subset of $t$,
\begin{align}
n\Delta\mathcal{I}(t,t_A)
=
\frac{
\left\{
H_n(R_n\cap A)
-
\frac{\beta_n(A)}{\alpha_n}H_n(R_n)
\right\}^2
}{
\beta_n(A)\{\alpha_n-\beta_n(A)\}
}.
\label{eq:tree_scaled_gain}
\end{align}

\paragraph{Categorical gains.}
For a valid categorical split region $A=\bigl\{(x,z):z_k\in B\bigr\}$, $B\in\mathcal{P}^\ast(\mathcal{Z}_k)$, with $0<P(A)<1$, we have $\alpha_n\overset{p}{\longrightarrow}1$ and $\beta_n(A)\overset{p}{\longrightarrow}P(A)$.
Indeed,
$|\beta_n(A)-P_n(A)|\leq 1-P_n(R_n)\overset{p}{\longrightarrow}0$.
The denominator in \eqref{eq:tree_scaled_gain} therefore converges to $P(A)\{1-P(A)\}>0$, while its numerator is $O_p(1)$ by \eqref{eq:tree_uniform_sums}. There are only finitely many categorical split regions. Those with population probability zero or one cannot produce two nonempty children, almost surely. This proves \eqref{eq:tree_cat_tightness}.

\paragraph{Balanced gains.}
If $\eta\leq |t_A|/n_t\leq1-\eta$, then $\beta_n(A)\{\alpha_n-\beta_n(A)\}\geq \alpha_n^2\eta(1-\eta)$.
Since $\alpha_n\overset{p}{\longrightarrow}1$ and the numerator in \eqref{eq:tree_scaled_gain} is uniformly $O_p(1)$ by \eqref{eq:tree_uniform_sums}, we obtain \eqref{eq:tree_balanced_tightness}.

\paragraph{Continuous gains.}
Let $F(s)=\mathbb{P}(X_1\leq s)$. For $u\in(0,1)$, define $s_u=\inf\{s:F(s)\geq u\}$ and $A_u=\bigl\{(x,z):x_1\leq s_u\bigr\}$.
Continuity of $F$ gives $P(A_u)=u$. Since the CART candidate thresholds along $X_1$ realize every split of the order statistics of $\{x_{i1}:i\in t\}$ into a lower and an upper block, $t_{A_u}$ is the lower child of an actual candidate split of $t$; both of its children are nonempty with probability tending to one because $u\in(0,1)$ and $n_t/n\overset{p}{\longrightarrow}1$. By \eqref{eq:tree_negligible_removal},
\begin{align*}
H_n(R_n\cap A_u)&=H_n(A_u)+o_p(1),
\\
H_n(R_n)&=H_n(\mathbb{R}^p\times\mathcal{Z})+o_p(1).
\end{align*}
Also, $\beta_n(A_u)\overset{p}{\longrightarrow}u$. For any finite collection $u_1,\ldots,u_m\in(0,1)$, the multivariate central limit theorem and \eqref{eq:tree_scaled_gain} therefore give
\begin{align*}
\left(
n\Delta\mathcal{I}(t,t_{A_{u_\ell}})
\right)_{\ell=1}^m
\overset{d}{\longrightarrow}
\left(
\frac{\mathbb{B}(u_\ell)^2}{u_\ell(1-u_\ell)}
\right)_{\ell=1}^m,
\end{align*}
where $\mathbb{B}$ is a standard Brownian bridge and $\overset{d}{\longrightarrow}$ denotes convergence in distribution.

By the law of the iterated logarithm at zero, $\sup_{0<u<1}\mathbb{B}(u)^2/\{u(1-u)\}=\infty$ almost surely.
Consequently, for every $K>0$ and $\varepsilon>0$, continuity of the Brownian bridge on $(0,1)$ allows us to choose a finite collection $u_1,\ldots,u_m$ such that
\begin{align*}
\mathbb{P}\left(
\max_{1\leq\ell\leq m}
\frac{\mathbb{B}(u_\ell)^2}{u_\ell(1-u_\ell)}
>K
\right)>1-\varepsilon.
\end{align*}
Since $M_t(X_1)$ maximizes over all threshold splits,
\begin{align*}
\liminf_{n\to\infty}
\mathbb{P}\bigl(nM_t(X_1)>K\bigr)
\geq1-\varepsilon.
\end{align*}
As $\varepsilon>0$ is arbitrary, this proves \eqref{eq:tree_cont_divergence}. Combining \eqref{eq:tree_cont_divergence} and \eqref{eq:tree_cat_tightness} gives
\begin{align*}
\mathbb{P}\left(
\max_{k\in[q]}M_t(Z_k)
\geq
\max_{j\in[p]}M_t(X_j)
\right)\longrightarrow0.
\end{align*}
Likewise, \eqref{eq:tree_balanced_tightness} shows that, for every fixed $\eta\in(0,1/2)$, the probability that a maximizing split has both children of size at least $\eta n_t$ tends to zero. This proves the final assertion.
\end{proof}

\begin{proof}[Proof of \cref{thrm:tree_null}]
Fix $T\in\mathcal{T}$ and let $D$ be a common upper bound on the tree depths. Starting from $t_0=[n]$, follow the larger child whenever the current node splits, breaking size ties arbitrarily. Once a terminal node is reached, retain that node at subsequent steps.

We prove inductively that $n_{t_d}/n\overset{p}{\longrightarrow}1$ for every fixed $d\leq D$.
This holds at $d=0$. Suppose it holds at depth $d$. The region corresponding to $t_d$ belongs to $\mathcal{R}$, so \cref{lem:tree_nodewise} applies despite the response-dependent selection of that region. If $t_d$ splits, its smaller child has $o_p(n)$ observations, and its larger child therefore contains $n-o_p(n)$ observations. If $t_d$ is terminal, its index set is unchanged. This completes the induction.

The same lemma shows that, at each of the finitely many steps, the probability that $t_d$ splits along a categorical predictor tends to zero. Moreover, whenever $t_d$ splits, its smaller child has fewer than $cn$ observations with probability tending to one. The node-size condition in \cref{thrm:tree_null} then forces that child to be terminal. Thus, with probability tending to one, all internal nodes of $T$ lie on the larger-child path, and every split on that path is continuous. Taking a union bound over the finitely many trees $T\in\mathcal{T}$, the at most $D$ depths, and the two failure events at each step (the smaller child is not terminal; the split is categorical) gives
\begin{align*}
\lim_{n\to\infty}
\mathbb{P}\!\left(
\sum_{T\in\mathcal{T}}\sum_{t\in T}\sum_{k=1}^q
\mathbf{1}\bigl(t\text{ splits along }Z_k\bigr)>0
\right)=0.
\end{align*}

By \cref{lem:tree_nodewise} applied at the root, the root's maximizing gain is positive with probability tending to one. Hence the total MDI is positive with probability tending to one. On the event that the total MDI is positive and no categorical split occurs, the categorical ratio in \cref{thrm:tree_null} equals zero and the combined continuous ratio equals one. This event has probability tending to one, proving the claimed convergences in probability.
\end{proof}


\subsection{Proof of continuous-only fairness result (\cref{prop:fair})}\label{supsec:fair}


\begin{proof}
For a permutation $\pi$ of $[p]$, let $X^{\pi}$ denote the data with the $p$ coordinates of every sample relabeled by $\pi$. Since $X_1,\ldots,X_p$ are exchangeable and $Y\perp X$, we have $(X^{\pi},Y)\stackrel{d}{=}(X,Y)$. By symmetry of the CART construction, relabeling the predictors relabels the raw importance vector, so
\begin{align*}
\bigl(\Phi_n(X_1),\ldots,\Phi_n(X_p)\bigr)
\stackrel{d}{=}
\bigl(\Phi_n(X_{\pi(1)}),\ldots,\Phi_n(X_{\pi(p)})\bigr)
\qquad\text{for every permutation }\pi.
\end{align*}
Thus the raw importance vector is exchangeable. Dividing by the positive coordinate sum preserves exchangeability, and the normalized coordinates sum to one. Their expectations are therefore all equal to some $a$, with $pa=1$. Hence the desired expectation is $1/p$ for every $j\in[p]$.
\end{proof}


\section{Proofs for the non-null model}\label{supsec:nonnull}



\subsection{Additional notation}
\label{supsec:nonnull_notation}


We first introduce the population and empirical quantities used in the proof of \cref{thrm:nonnull_allocation}. For a threshold $c\in\mathbb R$, define $F(c)=\mathbb P(X\le c)$, $\mu=\mathbb E[Y]$, and $A(c)=\mathbb E\bigl[(Y-\mu)\mathbf 1(X\le c)\bigr]$. For every threshold satisfying $0<F(c)<1$, define the corresponding population variance-impurity decrease by $G(c)=A(c)^2/\{F(c)(1-F(c))\}$.
Equivalently, $G(c)=F(c)\bigl(1-F(c)\bigr)\left[\mathbb E(Y\mid X\le c)-\mathbb E(Y\mid X>c)\right]^2$.
The largest population gain obtainable from a threshold split along $X$ is $M_*(X)=\sup_{c:\,0<F(c)<1}G(c)$. At the sample level, define $\widehat F_n(c)=n^{-1}\sum_{i=1}^n\mathbf 1(x_i\le c)$ and $\widehat A_n(c)=n^{-1}\sum_{i=1}^n(y_i-\bar Y_n)\mathbf 1(x_i\le c)$. For thresholds satisfying $0<\widehat F_n(c)<1$, define $\widehat G_n(c)=\widehat A_n(c)^2/\{\widehat F_n(c)(1-\widehat F_n(c))\}$. The largest empirical gain along $X$ is then $M_n(X)=\sup_{c:\,0<\widehat F_n(c)<1}\widehat G_n(c)$.
For the binary categorical predictor $Z$, let $M_n(Z)$ denote the empirical variance-impurity decrease obtained by separating the observations with $Z=z_0$ from those with $Z=z_1$. 


\subsection{Limit of the categorical gain}


\begin{lemma}
\label{lem:cat_gain} Set $v_Z=\operatorname{Var}(g(Z))$. Under the additive model, $M_n(Z)\xrightarrow{a.s.}v_Z$.
\end{lemma}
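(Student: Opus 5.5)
The plan is to write $M_n(Z)$ in closed form and then apply the strong law of large numbers to each ingredient. Let $t_0=\{i:z_i=z_0\}$ and $t_1=[n]\setminus t_0$. Then $n_{t_0}/n\to\pi$ almost surely, so almost surely both groups are nonempty for all large $n$. On that event the split is valid, and a standard between-group variance identity (equivalently \eqref{eq:varphi_identity} rearranged) gives
\begin{align*}
M_n(Z)=\frac{n_{t_0}}{n}\frac{n_{t_1}}{n}\bigl(\bar y_{t_0}-\bar y_{t_1}\bigr)^2 .
\end{align*}
This matches the population form $G$ used in \cref{supsec:nonnull_notation}, with the indicator of $Z=z_0$ in place of $X\le c$.

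Next, handle the group means. Write $\bar y_{t_0}=\bigl(n^{-1}\sum_i y_i\mathbf 1(z_i=z_0)\bigr)/(n_{t_0}/n)$. Because $\mathbb E|Y|<\infty$ (indeed $\operatorname{Var}(Y)<\infty$), the SLLN applied to the numerator and denominator separately gives
\begin{align*}
\bar y_{t_0}\xrightarrow{a.s.}\frac{\mathbb E[Y\mathbf 1(Z=z_0)]}{\pi}=\mathbb E[Y\mid Z=z_0].
\end{align*}
Independence of $X$, $Z$, $\varepsilon$ and $\mathbb E\varepsilon=0$ give $\mathbb E[Y\mid Z=z_0]=\mathbb E f(X)+g(z_0)$. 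In the same way, $\bar y_{t_1}\to \mathbb E f(X)+g(z_1)$ almost surely. Hence $\bar y_{t_0}-\bar y_{t_1}\to g(z_0)-g(z_1)$ almost surely, and the $\mathbb E f(X)$ terms cancel.

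Finally, combine the limits by continuity:
\begin{align*}
M_n(Z)\xrightarrow{a.s.}\pi(1-\pi)\bigl(g(z_0)-g(z_1)\bigr)^2 .
\end{align*}
Since $g(Z)$ takes two values, with probabilities $\pi$ and $1-\pi$, this limit is exactly $\operatorname{Var}(g(Z))=v_Z$.

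There is no real obstacle. The only points needing care are the almost-sure validity of the split for all large $n$, which follows from $\pi\in(0,1)$ and the SLLN for the counts, and the use of integrability of $f(X)$ and $\varepsilon$ to justify the SLLN for $Y\mathbf 1(Z=z_0)$.
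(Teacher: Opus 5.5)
Your proposal is correct and follows the paper's proof essentially step for step: you use the closed form $\frac{n_{t_0}}{n}\frac{n_{t_1}}{n}(\bar y_{t_0}-\bar y_{t_1})^2$, apply the SLLN to the proportion and the group means, use independence to get $\mathbb E[Y\mid Z=z]=\mathbb E f(X)+g(z)$, and conclude by continuity. Your extra remarks—that the split is eventually valid almost surely, and that each group mean is a ratio of SLLN averages—simply spell out details the paper leaves implicit.
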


\begin{proof}
The binary variable $Z$ has only one nontrivial split, separating $Z=z_0$ from $Z=z_1$. Let $\widehat\pi_n=n^{-1}\sum_{i=1}^n\mathbf 1(z_i=z_0)$, and let $\bar Y_{n,z}$ be the sample mean of $Y$ among observations with $Z=z$. The variance-impurity decrease for the split on $Z$ is
\begin{align*}
M_n(Z)
=
\widehat\pi_n(1-\widehat\pi_n)
(\bar Y_{n,z_0}-\bar Y_{n,z_1})^2.
\end{align*}
By the strong law of large numbers, $\widehat\pi_n\to \pi$ a.s. and $\bar Y_{n,z}\to\mathbb E[Y\mid Z=z]$ a.s. Since $X$, $Z$, and $\varepsilon$ are mutually independent, $\mathbb E[Y\mid Z=z]=\mathbb E[f(X)]+g(z)$. Therefore $\bar Y_{n,z_0}-\bar Y_{n,z_1}\to g(z_0)-g(z_1)$ a.s.
It follows that
\begin{align*}
&M_n(Z)
\to
\pi(1-\pi)\{g(z_0)-g(z_1)\}^2
=
\operatorname{Var}(g(Z))
=
v_Z
\qquad\text{a.s.} \qedhere
\end{align*}
\end{proof}


\subsection{Total impurity decrease in a fully grown tree}

\begin{lemma}[Total impurity decrease in a fully grown tree]
\label{lem:total_impurity_decrease} Consider a fully grown CART tree, meaning that every leaf has zero impurity. Then
\begin{align*}
\sum_{t\text{ internal}}
\frac{n_t}{n}\Delta\mathcal I(t,t_L)
=
\mathcal I([n]).
\end{align*}
Consequently, for the unnormalized MDI importances,
\begin{align*}
\widehat\Phi_n(X)+\widehat\Phi_n(Z)
=
\mathcal I([n]).
\end{align*}
\end{lemma}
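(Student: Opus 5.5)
The identity is a telescoping sum of weighted impurities over the tree. For any node $t$ with children $t_L,t_R$, multiply the definition \cref{eq:impurity_decrease} by $n_t/n$:
\begin{align*}
\frac{n_t}{n}\Delta\mathcal I(t,t_L)
=
\frac{n_t}{n}\mathcal I(t)-\frac{|t_L|}{n}\mathcal I(t_L)-\frac{|t_R|}{n}\mathcal I(t_R).
\end{align*}
Writing $w(t)=\frac{n_t}{n}\mathcal I(t)$, each internal node therefore contributes $w(t)-w(t_L)-w(t_R)$.

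Summing over all internal nodes, every non-root node appears exactly once as a child, with a minus sign. Every internal node appears exactly once as a parent, with a plus sign. After cancellation the sum equals
\begin{align*}
w([n])-\sum_{t\text{ leaf}}w(t).
\end{align*}
I would make this precise by induction on the number of internal nodes. A tree consisting only of the root gives $0=w([n])-w([n])$. Splitting one leaf $t$ adds the term $w(t)-w(t_L)-w(t_R)$ to the left side. On the right side it replaces $w(t)$ by $w(t_L)+w(t_R)$ in the leaf sum. So both sides change by the same amount.

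Since the tree is fully grown, $\mathcal I(t)=0$ at every leaf, and the leaf sum vanishes. Also $w([n])=\mathcal I([n])$ because $n_{[n]}=n$. This proves the first identity.

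For the MDI statement, apply \cref{eq:mdi} to a single tree, as the unnormalized $\widehat\Phi_n$ presumably denotes. Each internal node splits along exactly one of $X$ or $Z$, and at that node $M_t(\cdot)=\Delta\mathcal I(t,t_L)$ for the chosen split. Hence $\widehat\Phi_n(X)+\widehat\Phi_n(Z)$ equals the sum of $\frac{n_t}{n}\Delta\mathcal I(t,t_L)$ over internal nodes. For an ensemble, averaging over trees preserves this, since $\mathcal I([n])$ does not depend on the tree.

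The only point needing care is that each internal node is attributed to exactly one variable. If there are ties among maximizing variables, I would assume the split is attributed to the variable actually chosen. Beyond that I expect no real obstacle: the argument is pure bookkeeping.
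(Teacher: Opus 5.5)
Your proposal is correct and follows essentially the same argument as the paper. Both multiply the impurity-decrease identity by $n_t/n$ and telescope over internal nodes so that only the root term and the leaf terms survive. Zero leaf impurity then kills the leaf terms, and attributing each internal node's decrease to its splitting variable gives the MDI identity. Your induction on the number of internal nodes and your remark on tie attribution only make the paper's cancellation bookkeeping slightly more explicit.
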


\begin{proof}
By the definition of impurity decrease, for every internal node $t$,
\begin{align*}
\Delta\mathcal I(t,t_L)
=
\mathcal I(t)
-
\frac{|t_L|}{n_t}\mathcal I(t_L)
-
\frac{|t_R|}{n_t}\mathcal I(t_R).
\end{align*}
Multiplying both sides by $n_t/n$ gives
\begin{align*}
\frac{n_t}{n}\Delta\mathcal I(t,t_L)
=
\frac{n_t}{n}\mathcal I(t)
-
\frac{|t_L|}{n}\mathcal I(t_L)
-
\frac{|t_R|}{n}\mathcal I(t_R).
\end{align*}
Now sum this identity over all internal nodes. Every non-root internal node appears once with a positive sign, as a parent node, and once with a negative sign, as a child node. Hence all non-root internal node terms cancel. We get
\begin{align*}
\sum_{t\text{ internal}}
\frac{n_t}{n}\Delta\mathcal I(t,t_L)
=
\mathcal I([n])
-
\sum_{\ell\text{ leaf}}
\frac{|\ell|}{n}\mathcal I(\ell).
\end{align*}
Because every leaf has zero impurity, the leaf sum vanishes. Hence
\begin{align*}
\sum_{t\text{ internal}}
\frac{n_t}{n}\Delta\mathcal I(t,t_L)
=
\mathcal I([n]).
\end{align*}
Finally, by definition, the unnormalized MDI importance of a variable is the sum of the weighted impurity decreases over all internal nodes split on that variable. Since in the present setting the only variables are $X$ and $Z$, summing their importances gives the total weighted impurity decrease:
\begin{align*}
&\widehat\Phi_n(X)+\widehat\Phi_n(Z)
=
\sum_{t\text{ internal}}
\frac{n_t}{n}\Delta\mathcal I(t,t_L)
=
\mathcal I([n]). \qedhere
\end{align*}
\end{proof}


\subsection{Uniform consistency of the gain ingredients}


\begin{lemma}
\label{lem:uniform_ingredients}
Let $\{(X_i,Y_i)\}_{i=1}^n$ be i.i.d., and assume $\mathbb E|Y|<\infty$. Then $\sup_c|\widehat F_n(c)-F(c)|\to0$ a.s. and $\sup_c|\widehat A_n(c)-A(c)|\to0$ a.s.
\end{lemma}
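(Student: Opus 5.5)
The first claim, $\sup_c|\widehat F_n(c)-F(c)|\to0$ almost surely, is the Glivenko--Cantelli theorem, and I will invoke it directly. For the second claim I first remove the random centering. Write
\begin{align*}
\widehat A_n(c)=\frac1n\sum_{i=1}^n (y_i-\mu)\mathbf 1(x_i\le c)-(\bar Y_n-\mu)\widehat F_n(c).
\end{align*}
Since $0\le\widehat F_n\le1$ and $\bar Y_n\to\mu$ almost surely by the strong law (using $\mathbb E|Y|<\infty$), the second term tends to zero uniformly in $c$. It therefore suffices to show that $B_n(c)=n^{-1}\sum_{i}W_i\mathbf 1(x_i\le c)$, with $W_i=y_i-\mu$, converges uniformly almost surely to $A(c)=\mathbb E[W\mathbf 1(X\le c)]$.

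To do this, I split $W=W^+-W^-$ and treat each part separately, so assume $W\ge0$ with $\mathbb E W<\infty$. Then $c\mapsto B_n(c)$ and $c\mapsto A(c)$ are both nondecreasing. Moreover, $A$ is bounded by $\mathbb E W$, has limit $0$ at $-\infty$ and $\mathbb E W$ at $+\infty$, and its jumps are $\mathbb E[W\mathbf 1(X=c)]$. I then run the standard Glivenko--Cantelli bracketing argument with $A$ in the role of the distribution function.

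Fix $\epsilon>0$ and choose finitely many points $-\infty=c_0<c_1<\dots<c_m=\infty$ such that $A(c_{r}^-)-A(c_{r-1})\le\epsilon$. Such points exist by monotonicity and boundedness of $A$, handling left limits as in the classical proof. By the strong law applied to the finitely many variables $W\mathbf 1(X\le c_r)$ and $W\mathbf 1(X<c_r)$, almost surely $B_n(c_r)\to A(c_r)$ and $B_n(c_r^-)\to A(c_r^-)$ for all $r$. For $c\in[c_{r-1},c_r)$, monotonicity gives
\begin{align*}
B_n(c_{r-1})-A(c_r^-)\le B_n(c)-A(c)\le B_n(c_r^-)-A(c_{r-1}).
\end{align*}
Hence $\limsup_n\sup_c|B_n(c)-A(c)|\le\epsilon$ almost surely. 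Taking $\epsilon=1/k$ and intersecting the countably many probability-one events finishes the proof.

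The only delicate step is the bracketing construction when $A$ has atoms. In the present setting $X$ is absolutely continuous, so $A$ is continuous and the left limits are unnecessary. An alternative that avoids bracketing altogether is to note that $\{w\mathbf 1(x\le c)\}$ is a VC-subgraph class with integrable envelope $|W|$, so it is Glivenko--Cantelli by standard theory \citep{vandervaart1996weak}. I would present the elementary argument and mention this one as a remark.
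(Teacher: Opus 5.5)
Your proof is correct and follows the same overall strategy as the paper's: remove the random centering with the strong law, then bracket over a finite grid of thresholds and apply the strong law at the grid points. The brackets are built differently, though.

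The paper works with the uncentered $\widehat H_n(c)=n^{-1}\sum_i y_i\mathbf 1(x_i\le c)$. Its reduction therefore also needs Glivenko--Cantelli for $\widehat F_n$, through the term $|\mu|\sup_c|\widehat F_n(c)-F(c)|$. It avoids monotonicity altogether. It picks a grid with $\mathbb E[|Y|\mathbf 1(a_{j-1}<X\le a_j)]<\varepsilon/3$ on every cell, and bounds the within-cell oscillation of $\widehat H_n$ and $H$ by the empirical and population $|Y|$-mass of that cell. This is shorter than your argument, since it needs no $W^\pm$ split and no left limits. But because $\varepsilon$ is arbitrary, that choice of grid tacitly requires $\mathbb E[|Y|\mathbf 1(X=a)]=0$ for every $a$. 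The paper justifies this only by $\mathbb E|Y|<\infty$; it actually comes from the absolute continuity of $X$, which is a standing assumption of the paper rather than a hypothesis of the lemma.

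Your route is slightly longer but proves the lemma exactly as stated, for an arbitrary law of $X$. You split $W=W^+-W^-$, so that each $c\mapsto\mathbb E[W^{\pm}\mathbf 1(X\le c)]$ is a multiple of a distribution function. You then run the classical Glivenko--Cantelli argument with the left limits $B_n(c_r^-)$ and $A(c_r^-)$. Centering at $\mu$ first also makes your reduction step independent of the first claim.
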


\begin{proof}
The first statement is the Glivenko--Cantelli theorem.

For the second statement, define $H(c)=\mathbb E[Y\mathbf 1(X\le c)]$ and $\widehat H_n(c)=n^{-1}\sum_{i=1}^n y_i\mathbf 1(x_i\le c)$. It is enough to prove $\sup_c|\widehat H_n(c)-H(c)|\to0$ a.s.
Indeed,
$\widehat A_n(c)=\widehat H_n(c)-\bar Y_n\widehat F_n(c)$ and $A(c)=H(c)-\mu F(c)$, so
\begin{align*}
\sup_c|\widehat A_n(c)-A(c)|
\le
\sup_c|\widehat H_n(c)-H(c)|
+
|\bar Y_n-\mu|
+
|\mu|\sup_c|\widehat F_n(c)-F(c)|.
\end{align*}
The strong law of large numbers gives $\bar Y_n\to\mu$ a.s., so it remains to control $\widehat H_n$.

Fix $\varepsilon>0$. Since $\mathbb E|Y|<\infty$, choose
\begin{align*}
-\infty=a_0<a_1<\cdots<a_m=\infty,
\qquad
\max_{1\le j\le m}\mathbb E\left[|Y|\mathbf 1(a_{j-1}<X\le a_j)\right]<\frac{\varepsilon}{3}.
\end{align*}
For $c\in(a_{j-1},a_j]$,
\begin{align*}
\begin{aligned}
|\widehat H_n(c)-H(c)|
&\le
|\widehat H_n(a_{j-1})-H(a_{j-1})|
+
|\widehat H_n(c)-\widehat H_n(a_{j-1})|
+
|H(c)-H(a_{j-1})|.
\end{aligned}
\end{align*}
Moreover,
\begin{align*}
|\widehat H_n(c)-\widehat H_n(a_{j-1})|
&\le
\frac1n\sum_{i=1}^n |y_i|\mathbf 1(a_{j-1}<x_i\le a_j),
\\
|H(c)-H(a_{j-1})|
&\le
\mathbb E\left[|Y|\mathbf 1(a_{j-1}<X\le a_j)\right].
\end{align*}
Taking suprema over $c$, we obtain
\begin{align*}
\begin{aligned}
\sup_c|\widehat H_n(c)-H(c)|
&\le
\max_{0\le j\le m}
|\widehat H_n(a_j)-H(a_j)| \\
&\quad+
\max_{1\le j\le m}
\frac1n\sum_{i=1}^n
|y_i|\mathbf 1(a_{j-1}<x_i\le a_j) \\
&\quad+
\max_{1\le j\le m}
\mathbb E[
|Y|\mathbf 1(a_{j-1}<X\le a_j)
].
\end{aligned}
\end{align*}
For each fixed $j$, the strong law of large numbers gives $\widehat H_n(a_j)\to H(a_j)$ a.s. and $n^{-1}\sum_{i=1}^n |y_i|\mathbf 1(a_{j-1}<x_i\le a_j)\to \mathbb E[|Y|\mathbf 1(a_{j-1}<X\le a_j)]$ a.s.
Since there are only finitely many intervals,
\begin{align*}
\limsup_n\sup_c|\widehat H_n(c)-H(c)|
\le
\frac{2\varepsilon}{3}
<
\varepsilon
\qquad\text{a.s.}
\end{align*}
As $\varepsilon>0$ is arbitrary, $\sup_c|\widehat H_n(c)-H(c)|\to0$ a.s.
The desired conclusion for $\widehat A_n$ follows.
\end{proof}


\subsection{Limit of the continuous gain}


\begin{lemma}
\label{lem:cont_gain} Suppose that $\{(X_i,Y_i)\}_{i=1}^n$ are i.i.d.,
$X$ is absolutely continuous, and
$\mathbb E[Y^2]<\infty$. Then $M_n(X)\xrightarrow{a.s.}M_*(X)$.
\end{lemma}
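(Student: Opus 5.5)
The plan is to split the thresholds $c$ into a ``middle'' region, where $F(c)$ is bounded away from $0$ and $1$, and two ``tail'' regions. On the middle region I will use the uniform convergence of \cref{lem:uniform_ingredients}. On the tails I will use a Cauchy--Schwarz bound showing that both $G$ and $\widehat G_n$ are uniformly small there. Letting the cutoff shrink then gives $M_n(X)\to M_*(X)$ almost surely.

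\emph{Tail control (main step).} Since $\mathbb E[(Y-\mu)\mathbf 1(X\le c)]=-\mathbb E[(Y-\mu)\mathbf 1(X>c)]$, Cauchy--Schwarz applied to either representation of $A(c)$ gives
\begin{align*}
G(c)\le \min\left\{\frac{\mathbb E[(Y-\mu)^2\mathbf 1(X\le c)]}{1-F(c)},\ \frac{\mathbb E[(Y-\mu)^2\mathbf 1(X> c)]}{F(c)}\right\}.
\end{align*}
For $\eta\in(0,1/4)$, define $\kappa(\eta)$ as the sum of $\sup_{F(c)\le 2\eta}\mathbb E[(Y-\mu)^2\mathbf 1(X\le c)]$ and $\sup_{F(c)\ge 1-2\eta}\mathbb E[(Y-\mu)^2\mathbf 1(X>c)]$. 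Because $\mathbb E[Y^2]<\infty$ and $X$ is continuous, dominated convergence gives $\kappa(\eta)\to0$ as $\eta\to0$. It follows that $G(c)\le 2\kappa(\eta)$ whenever $F(c)\le2\eta$ or $F(c)\ge1-2\eta$.

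The empirical version is identical. Using the sample identity $\sum_i(y_i-\bar Y_n)=0$,
\begin{align*}
\widehat G_n(c)\le \frac{1}{1-\widehat F_n(c)}\cdot\frac1n\sum_{i=1}^n (y_i-\bar Y_n)^2\mathbf 1(x_i\le c),
\end{align*}
and symmetrically for the upper tail. Next I bound $(y_i-\bar Y_n)^2\le 2(y_i-\mu)^2+2(\bar Y_n-\mu)^2$. I then apply the bracketing argument in the proof of \cref{lem:uniform_ingredients} with $Y$ replaced by the integrable weight $(Y-\mu)^2$; that argument only used integrability of the weight. This gives a.s.\ uniform convergence of $n^{-1}\sum_i (y_i-\mu)^2\mathbf 1(x_i\le c)$ to $\mathbb E[(Y-\mu)^2\mathbf 1(X\le c)]$. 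Combining this with Glivenko--Cantelli, almost surely for all large $n$ we get $\widehat G_n(c)\le 3\kappa(\eta)$ whenever $F(c)\le2\eta$ or $F(c)\ge1-2\eta$.

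\emph{Middle region.} On $C_\eta=\{c:\eta\le F(c)\le1-\eta\}$, the population denominator is at least $\eta(1-\eta)$. Eventually $\sup_c|\widehat F_n-F|<\eta/2$, so the empirical denominator is at least $\eta(1-\eta)/4$ on $C_\eta$. The map $(a,f)\mapsto a^2/\{f(1-f)\}$ is uniformly Lipschitz there, since $|A|,|\widehat A_n|$ are bounded by $\mathbb E|Y-\mu|+1$ eventually. Hence \cref{lem:uniform_ingredients} yields $\sup_{c\in C_\eta}|\widehat G_n(c)-G(c)|\to0$ a.s.

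\emph{Assembling.} Every threshold $c$ with $0<\widehat F_n(c)<1$ lies either in $C_\eta$ or in the tail region where $F(c)<\eta$ or $F(c)>1-\eta$, and the tail bounds above cover the latter. This gives
\begin{align*}
\limsup_n M_n(X)\le \max\Bigl\{\sup_{C_\eta}G,\ 3\kappa(\eta)\Bigr\}\le M_*(X)+3\kappa(\eta)\quad\text{a.s.}
\end{align*}
Conversely, the bound $G\le 2\kappa(\eta)$ on the population tails gives $\sup_{C_\eta}G\ge M_*(X)-2\kappa(\eta)$. Since $\sup_{C_\eta}\widehat G_n\to\sup_{C_\eta}G$, and thresholds in $C_\eta$ eventually satisfy $0<\widehat F_n<1$, we get $\liminf_n M_n(X)\ge M_*(X)-2\kappa(\eta)$ a.s. Taking $\eta\downarrow0$ along a countable sequence completes the argument. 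The main obstacle is the tail step: without the Cauchy--Schwarz bound, the vanishing denominator $F(1-F)$ would block any uniform argument.
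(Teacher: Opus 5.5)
Your proof is correct and follows essentially the same route as the paper. Cauchy--Schwarz makes both $G$ and $\widehat G_n$ uniformly small where the split is near the boundary, and a Lipschitz bound for $(a,u)\mapsto a^2/\{u(1-u)\}$ on a compact set, fed by \cref{lem:uniform_ingredients}, handles the interior. The differences are cosmetic: you partition thresholds by $F$ rather than $\widehat F_n$; you use a bracketing uniform LLN for the weight $(Y-\mu)^2$ instead of monotonicity in $c$ plus the SLLN at the single threshold $F^{-1}(\alpha')$; and you get the lower bound from interior uniform convergence rather than from near-optimal fixed thresholds $c_m$. Your constant $3\kappa(\eta)$ is slightly off: the bound is really about $\frac{2}{1-2\eta}\kappa(\eta)+o(1)$, which is up to $4\kappa(\eta)$ for $\eta<1/4$. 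This is harmless since $\kappa(\eta)\to0$.
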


\begin{proof}
We first record the boundary behavior of the population gain. By Cauchy--Schwarz,
\begin{align*}
A(c)^2
=
\left\{
\mathbb E[(Y-\mu)\mathbf 1(X\le c)]
\right\}^2
\le
F(c)\,
\mathbb E[(Y-\mu)^2\mathbf 1(X\le c)].
\end{align*}
Hence
\begin{align*}
G(c)
=
\frac{A(c)^2}{F(c)\{1-F(c)\}}
\le
\frac{
\mathbb E[(Y-\mu)^2\mathbf 1(X\le c)]
}
{1-F(c)}.
\end{align*}
As $F(c)\to0$, the numerator tends to zero by dominated convergence, while $1-F(c)\to1$. Therefore $G(c)\to0$ as $F(c)\to0$. Similarly, since $A(c)=-\mathbb E[(Y-\mu)\mathbf 1(X>c)]$,
we have
\begin{align*}
A(c)^2
\le
\{1-F(c)\}
\mathbb E[(Y-\mu)^2\mathbf 1(X>c)],
\end{align*}
and therefore
\begin{align*}
G(c)
\le
\frac{
\mathbb E[(Y-\mu)^2\mathbf 1(X>c)]
}
{F(c)}
\to0
\qquad\text{as }F(c)\to1.
\end{align*}
Thus the population gain is negligible at the two boundaries.

Write $q(a,u)=a^2/\{u(1-u)\}$, so that $G(c)=q(A(c),F(c))$ and $\widehat G_n(c)=q(\widehat A_n(c),\widehat F_n(c))$ for every threshold $c$. By the law of total variance, $0\leq G(c)\leq\operatorname{Var}(Y)$ whenever $0<F(c)<1$. For each $m\in\mathbb N$, choose a deterministic threshold $c_m$ such that $0<F(c_m)<1$ and $G(c_m)>M_*(X)-1/m$.
By \cref{lem:uniform_ingredients}, $\widehat F_n(c_m)\to F(c_m)$ and $\widehat A_n(c_m)\to A(c_m)$ almost surely. Hence $\widehat G_n(c_m)\to G(c_m)$ almost surely, and $c_m$ induces a valid empirical split for all sufficiently large $n$, almost surely. Thus
\begin{align*}
\liminf_{n\to\infty}M_n(X)\geq G(c_m)
\qquad\text{almost surely}.
\end{align*}
Intersecting these probability-one events over $m\in\mathbb N$ and letting $m\to\infty$ gives
\begin{align*}
\liminf_{n\to\infty}M_n(X)\geq M_*(X)
\qquad\text{almost surely}.
\end{align*}

It remains to prove the upper bound. Fix $\varepsilon>0$. From the boundary behavior above, choose $\alpha'\in(0,1/2)$ such that
\begin{align*}
2\mathbb E\left[
(Y-\mu)^2\mathbf 1(X\le F^{-1}(\alpha'))
\right]
<
\frac{\varepsilon}{2},
\end{align*}
and
\begin{align*}
2\mathbb E\left[
(Y-\mu)^2\mathbf 1(X>F^{-1}(1-\alpha'))
\right]
<
\frac{\varepsilon}{2}.
\end{align*}
Now choose $\alpha\in(0,\alpha')$. We split the thresholds into the three regions $\{\widehat F_n(c)<\alpha\}$, $\{\alpha\le \widehat F_n(c)\le 1-\alpha\}$, and $\{\widehat F_n(c)>1-\alpha\}$. First consider the left boundary, where $\widehat F_n(c)=k/n\le\alpha$.
Then $1-\widehat F_n(c)\ge1/2$, and hence
\begin{align*}
\widehat G_n(c)
\le
\frac{2\widehat A_n(c)^2}{\widehat F_n(c)}.
\end{align*}
Since $\widehat A_n(c)=n^{-1}\sum_{x_i\le c}(y_i-\bar Y_n)$, we obtain
\begin{align*}
\frac{2\widehat A_n(c)^2}{\widehat F_n(c)}
=
\frac{
2\left(\sum_{x_i\le c}(y_i-\bar Y_n)\right)^2
}{nk}.
\end{align*}
By Cauchy--Schwarz,
\begin{align*}
\left(\sum_{x_i\le c}(y_i-\bar Y_n)\right)^2
\le
k\sum_{x_i\le c}(y_i-\bar Y_n)^2.
\end{align*}
Therefore
\begin{align*}
\widehat G_n(c)
\le
\frac2n
\sum_{x_i\le c}(y_i-\bar Y_n)^2.
\end{align*}
The right-hand side is nondecreasing in $c$. Thus
\begin{align*}
\sup_{\widehat F_n(c)\le\alpha}\widehat G_n(c)
\le
\frac2n
\sum_{x_i\le x_{(\lfloor \alpha n\rfloor)}}
(y_i-\bar Y_n)^2.
\end{align*}
Since $\alpha<\alpha'$ and $\sup_c|\widehat F_n(c)-F(c)|\to0$ a.s., we have $x_{(\lfloor \alpha n\rfloor)}\le F^{-1}(\alpha')$ for all sufficiently large $n$ a.s.
Consequently,
\begin{align*}
\sup_{\widehat F_n(c)\le\alpha}\widehat G_n(c)
\le
\frac2n
\sum_{x_i\le F^{-1}(\alpha')}
(y_i-\bar Y_n)^2
\end{align*}
for all large $n$ almost surely.

At the fixed threshold $F^{-1}(\alpha')$, the strong law of large numbers gives
\begin{align*}
\frac2n
\sum_{x_i\le F^{-1}(\alpha')}
(y_i-\bar Y_n)^2
\to
2\mathbb E\left[
(Y-\mu)^2\mathbf 1(X\le F^{-1}(\alpha'))
\right]
<
\frac{\varepsilon}{2}
\qquad\text{a.s.}
\end{align*}
Therefore
\begin{align*}
\limsup_n
\sup_{\widehat F_n(c)\le\alpha}\widehat G_n(c)
\le
\frac{\varepsilon}{2}
\qquad\text{a.s.}
\end{align*}
The right boundary is analogous. Using $\sum_{x_i\le c}(y_i-\bar Y_n)=-\sum_{x_i>c}(y_i-\bar Y_n)$, one obtains
\begin{align*}
\limsup_n
\sup_{\widehat F_n(c)\ge1-\alpha}\widehat G_n(c)
\le
2\mathbb E\left[
(Y-\mu)^2\mathbf 1(X>F^{-1}(1-\alpha'))
\right]
<
\frac{\varepsilon}{2}
\qquad\text{a.s.}
\end{align*}
It remains to control the interior region $\alpha\le \widehat F_n(c)\le 1-\alpha$. Recall $q(a,u)=a^2/\{u(1-u)\}$ from above. On the interior region, the uniform convergence of $\widehat F_n$ gives $\alpha/2\le F(c)\le 1-\alpha/2$ for all large $n$ a.s. Also, $|A(c)|\le \mathbb E|Y-\mu|=:B_A<\infty$, and by \cref{lem:uniform_ingredients}, $\sup_c|\widehat A_n(c)-A(c)|\to0$ a.s. Thus, for all large $n$, both $(A(c),F(c))$ and $(\widehat A_n(c),\widehat F_n(c))$ belong to the compact set $K_\alpha=[-B_A-1,B_A+1]\times[\alpha/2,1-\alpha/2]$.
Since $q$ is continuously differentiable on $K_\alpha$, it is Lipschitz there. Therefore, for some $L_\alpha<\infty$,
\begin{align*}
\begin{aligned}
\sup_{\alpha\le \widehat F_n(c)\le1-\alpha}
\widehat G_n(c)
&\le
\sup_c G(c)
+
L_\alpha
\left\{
\sup_c|\widehat A_n(c)-A(c)|
+
\sup_c|\widehat F_n(c)-F(c)|
\right\} \\
&=
M_*(X)+o(1)
\qquad\text{a.s.}
\end{aligned}
\end{align*}
where $o(1)$ denotes a term converging to zero almost surely. Combining the left boundary, right boundary, and interior bounds, we get
\begin{align*}
\limsup_n M_n(X)
\le
\max\left\{
\frac{\varepsilon}{2},
M_*(X),
\frac{\varepsilon}{2}
\right\}
\le
M_*(X)+\varepsilon
\qquad\text{a.s.}
\end{align*}
Since $\varepsilon>0$ is arbitrary,
\begin{align*}
\limsup_n M_n(X)\le M_*(X)
\qquad\text{a.s.}
\end{align*}
Together with the lower bound, $\liminf_n M_n(X)\ge M_*(X)$, we conclude that
\begin{align*}
&M_n(X)\to M_*(X)
\qquad\text{a.s.} \qedhere
\end{align*}
\end{proof}


\subsection{Proof of \cref{thrm:nonnull_allocation}}

\begin{proof}
Write $v_Z=\operatorname{Var}(g(Z))$, $G_X=M_*(X)$, and $\varepsilon_0=(v_Z-G_X)/3>0$. Let 
\begin{align*}
R_n=\{\text{every tree in }\mathcal T\text{ splits its root node on }Z\}.
\end{align*} 
We first prove that $\mathbb P(R_n)\to1$. Define the event $B_n=\{|M_n(Z)-v_Z|<\varepsilon_0\}\cap\{|M_n(X)-G_X|<\varepsilon_0\}$. On $B_n$, using $v_Z-\varepsilon_0=G_X+2\varepsilon_0$, we have
\begin{align*}
M_n(Z)
>
v_Z-\varepsilon_0
=
G_X+2\varepsilon_0
>
G_X+\varepsilon_0
>
M_n(X).
\end{align*}
Thus, on $B_n$, the empirical gain of the split on $Z$ is larger than the best empirical threshold gain of $X$. Every tree in $\mathcal T$ is fit to the same sample and splits its root node on the variable with the largest empirical gain, so on $B_n$ every tree splits its root on $Z$; that is, $B_n\subseteq R_n$. Therefore,
\begin{align*}
\mathbb P(R_n^c)
\le
\mathbb P(B_n^c)
\le
\mathbb P\bigl(|M_n(Z)-v_Z|\ge\varepsilon_0\bigr)
+
\mathbb P\bigl(|M_n(X)-G_X|\ge\varepsilon_0\bigr).
\end{align*}
By \cref{lem:cat_gain} and \cref{lem:cont_gain}, $M_n(Z)\to v_Z$ a.s. and $M_n(X)\to G_X$ a.s. Hence, $\mathbb P(R_n^c)\to0$.
We next prove the convergence of $\Phi_n(Z)$. For a tree $T\in\mathcal T$, write $\widehat\Phi_n^{\,T}(X)$ and $\widehat\Phi_n^{\,T}(Z)$ for its contributions to \eqref{eq:mdi}, so that $\Phi_n(\cdot)=|\mathcal T|^{-1}\sum_{T\in\mathcal T}\widehat\Phi_n^{\,T}(\cdot)$. Fix $T\in\mathcal T$. On the event $R_n$, the root split of $T$ separates the two categories $z_0$ and $z_1$, so $Z$ is constant within each child node and cannot be used again at any descendant node. The only term of $\widehat\Phi_n^{\,T}(Z)$ is therefore the root term, which equals $M_n(Z)$, regardless of the remaining structure of $T$. Averaging over $\mathcal T$,
$\Phi_n(Z)=M_n(Z)$ on $R_n$.
For any $\varepsilon>0$,
\begin{align*}
\mathbb P\bigl(
|\Phi_n(Z)-v_Z|>\varepsilon
\bigr)
\le
\mathbb P\bigl(
|M_n(Z)-v_Z|>\varepsilon
\bigr)
+
\mathbb P(R_n^c).
\end{align*}
The first term converges to zero by \cref{lem:cat_gain}, and the second converges to zero by the root-split result above. Therefore,
\begin{align*}
\Phi_n(Z)\overset{p}{\longrightarrow}v_Z
=
\operatorname{Var}(g(Z)).
\end{align*}
Now consider $\Phi_n(X)$. Each tree $T\in\mathcal T$ is fully grown, so every leaf has zero impurity; since $X$ and $Z$ are its only candidate splitting variables, \cref{lem:total_impurity_decrease} applied to $T$ gives $\widehat\Phi_n^{\,T}(X)+\widehat\Phi_n^{\,T}(Z)=\mathcal I([n])$. Averaging over $\mathcal T$, $\Phi_n(X)+\Phi_n(Z)=\mathcal I([n])$. By the definition of variance impurity at the root, $\mathcal I([n])=n^{-1}\sum_{i=1}^n y_i^2-\bar Y_n^2$. Since $\mathbb E[Y^2]<\infty$, the strong law of large numbers gives $\mathcal I([n])\xrightarrow{a.s.}\operatorname{Var}(Y)$.
By the mutual independence of $X$, $Z$, and $\varepsilon$,
\begin{align*}
\operatorname{Var}(Y)
=
\operatorname{Var}(f(X))
+
\operatorname{Var}(g(Z))
+
\sigma^2.
\end{align*}
Therefore,
\begin{align*}
\Phi_n(X)
=
\mathcal I([n])-\Phi_n(Z)
\overset{p}{\longrightarrow}
\operatorname{Var}(Y)-v_Z
=
\operatorname{Var}(f(X))+\sigma^2.
\end{align*}
Finally,
\begin{align*}
\Phi_n(X)+\Phi_n(Z)
=
\mathcal I([n])
\overset{p}{\longrightarrow}
\operatorname{Var}(Y)
=
\operatorname{Var}(f(X))
+
\operatorname{Var}(g(Z))
+
\sigma^2.
\end{align*}
The continuous mapping theorem gives
\begin{align*}
\frac{\Phi_n(Z)}
{\Phi_n(X)+\Phi_n(Z)}
\overset{p}{\longrightarrow}
\frac{\operatorname{Var}(g(Z))}
{\operatorname{Var}(f(X))+\operatorname{Var}(g(Z))+\sigma^2},
\end{align*}
and the normalized importance of $X$ converges to its complement.
\end{proof}


\section{Additional simulation details and results}\label{supsec:additional_simulations}



\subsection{Data-generating process}\label{supsec:sim_dgp}


Each trial draws $n=200$ observations of $p=25$ continuous predictors $X_1,\ldots,X_{25}$ and $q=25$ binary predictors $Z_1,\ldots,Z_{25}$, with $Z_k\overset{\mathrm{iid}}{\sim}\operatorname{Bernoulli}(1/2)$. In the independent design, $X_j\overset{\mathrm{iid}}{\sim}N(0,1)$, independent of the binary predictors. In the correlated design, the $j$th continuous predictor is generated from the $j$th binary predictor as
\begin{align*}
X_j = (2Z_j-1) + \eta_j, \qquad \eta_j\overset{\mathrm{iid}}{\sim}N(0,1),
\end{align*}
so that $X_j\mid Z_j=1\sim N(1,1)$ and $X_j\mid Z_j=0\sim N(-1,1)$, giving $\operatorname{Corr}(X_j,Z_j)=1/\sqrt{2}$. The $25$ pairs are mutually independent, and continuous predictors carry larger marginal variance in the correlated design than in the independent one.

In each trial, five continuous and five binary predictors are sampled uniformly without replacement to form the active sets $S_X$ and $S_Z$, independently of the correlated-design pairing. Each active predictor receives an independent sign $r\in\{-1,+1\}$, uniform, drawn once per trial. The response is
\begin{align*}
Y = \sum_{j\in S_X} r_j\,\psi(X_j) + 2\sum_{k\in S_Z} r_k\,Z_k + \varepsilon, \qquad \varepsilon\sim N(0,\sigma^2),
\end{align*}
with link $\psi(x)=x$ in the linear model and
\begin{align}\label{eq:exponential}
\psi(x) = \frac{\exp(-x^2/2) - 1/\sqrt{2}}{\sqrt{1/\sqrt{3} - 1/2}}
\end{align}
in the nonlinear model. The constants $1/\sqrt{2}$ and $\sqrt{1/\sqrt{3}-1/2}$ are the mean and standard deviation of $\exp(-X^2/2)$ for $X\sim N(0,1)$, so $\psi(X)$ has mean zero and unit variance under the independent design; since $\operatorname{Var}(2Z_k)=1$, all terms are then on a common scale. Within each trial, $\sigma^2$ is set to the empirical variance of the signal term across the $n$ observations, giving a signal-to-noise ratio of one.


\subsection{Random forest ROC curves}\label{supsec:additional_ranking}


\cref{fig:roc_rf} shows the average ROC curves from which the AUC values in \cref{tab:study1_auc} are derived. XGBoost and jittered XGBoost, which performed worse than several of the forest-based approaches, are omitted to make the figure easier to read.

\begin{figure}[H]
    \centering
    \includegraphics[width=\linewidth]{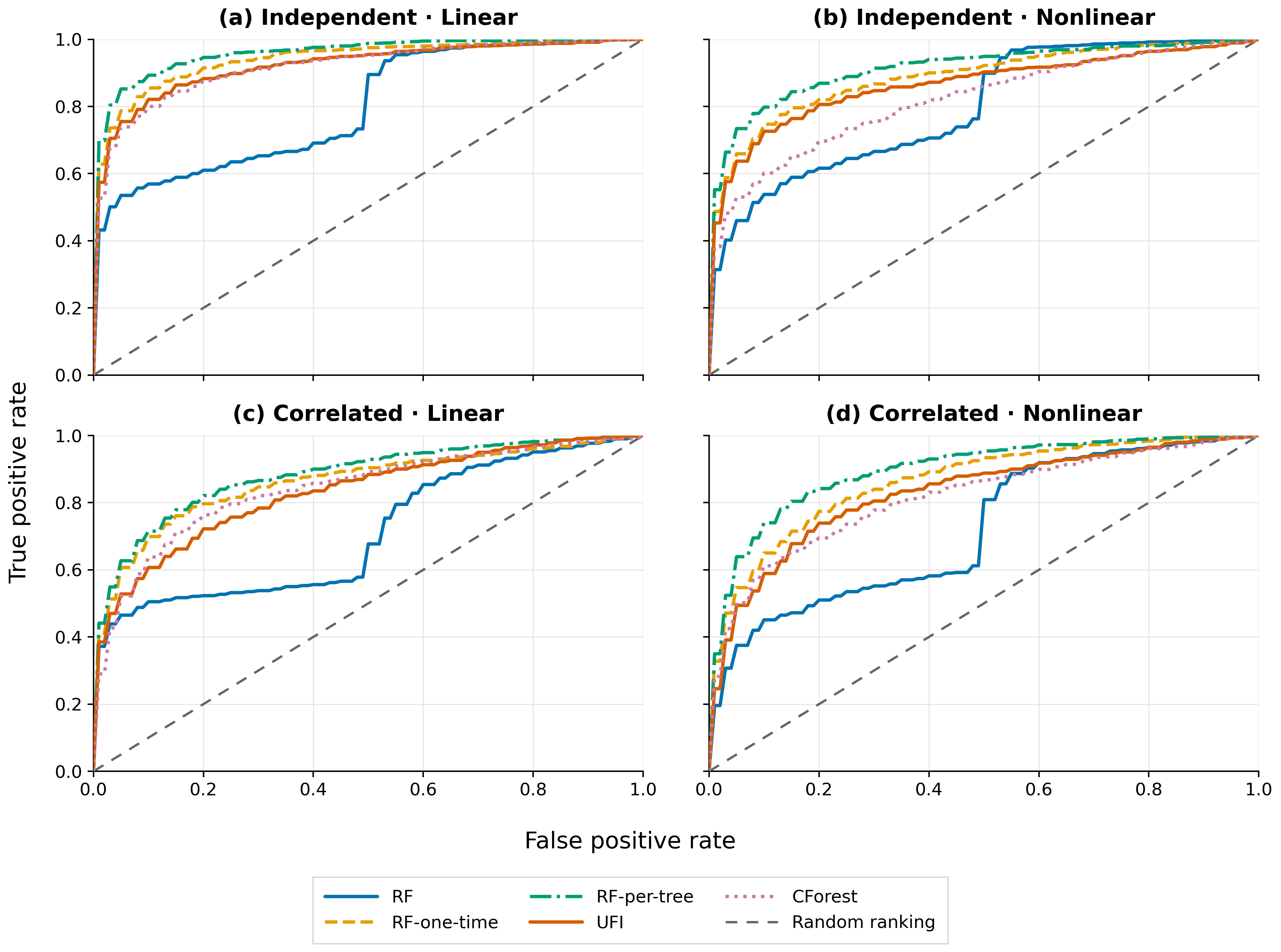}
    \caption{
    \textit{ROC curves}. Panels correspond to the independent-linear,
    independent-nonlinear, correlated-linear, and correlated-nonlinear
    simulation designs. Curves closer to the upper-left corner indicate better
    ranking of active predictors above inactive predictors; the dashed
    diagonal represents random ranking.
    }
    \label{fig:roc_rf}
\end{figure}

Standard RF's curve has a distinctive shape: it rises only modestly until the false positive rate reaches about $0.5$, then jumps sharply. This reflects the ranking behavior noted in the main text: RF places all $25$ continuous predictors, true and false alike, above every binary predictor, so the five active binary predictors are not identified until roughly ranks $26$--$30$.


\subsection{Time analysis}
\label{supsec:time_analysis}


We compare computation times of RF, RF-one-time,
RF-per-tree, UFI, and CForest under the independent-covariate, linear-signal setting of \cref{sec:simulation_design}. For fair comparison, each procedure uses 100 trees, all $p$ predictors are eligible at each split, the minimum number of observations required to split a node is two, and the minimum terminal-node size is one. Each tree is trained using a bootstrap sample of $n$ observations drawn with replacement. Reported values are the mean wall-clock time in seconds over five trials. CForest is run only up to $n=1000$ because its runtime made the larger sample sizes impractical. Results are reported in \cref{tab:runtime_n}.

\begin{table}[htbp]
\centering
\setlength{\tabcolsep}{14pt}
\begin{tabular}{cccccc}
\toprule
$n$ & RF & RF-one-time & RF-per-tree & UFI & CForest \\
\midrule
250 & 0.3 & 0.4 & 0.4 & 0.9 & 88.0 \\
500 & 0.6 & 0.9 & 0.9 & 2.4 & 181.9 \\
1000 & 1.2 & 1.9 & 2.0 & 7.0 & 362.8 \\
2000 & 1.8 & 2.8 & 2.9 & 13.8 & -- \\
4000 & 3.9 & 6.3 & 6.5 & 49.3 & -- \\
8000 & 9.0 & 14.8 & 14.9 & 184.2 & -- \\
\bottomrule
\end{tabular}
\caption{\textit{Runtime as a function of sample size}. Mean runtime in seconds over five trials, with $p=50$ and matched tree-specific parameters; trial-to-trial variation was negligible. CForest was run only up to $n=1000$; dashes indicate settings where its runtime made the comparison impractical.}
\label{tab:runtime_n}
\end{table}


\section{Bladder cancer analysis}\label{supsec:blca}


We use the bladder urothelial carcinoma cohort of The Cancer Genome Atlas \citep{weinstein2013cancer}, accessed through LinkedOmics \citep{vasaikar2018linkedomics}. Clinical records are available for $412$ patients and tumor microRNA profiles for $409$; merging on patient identifier gives $409$ patients with both (\cref{tab:blca_cohort}). The outcome is survival at day $548$ ($18$ months), derived from overall survival time and vital status: a patient known to be alive on or after day $548$ is a survivor, one recorded dead on or before it is not, and one last known alive earlier with no recorded death has undetermined status and is excluded, so censoring before day $548$ is handled by exclusion rather than modeled. We drop the $100$ patients with undetermined status, as well as those with unrecorded stage. Just two patients in the entire cohort are Stage~I, too few to treat as a separate category, which is why our analysis covers Stages~II, III, and IV only. The final cohort has $n=306$ patients: $192$ survivors and $114$ deaths within $18$ months; $88$, $103$, and $115$ patients are Stage~II, III, and IV, respectively. For predictors, we start from $426$ microRNA features, keep the $425$ that are complete across all $409$ merged patients, and greedily remove one from each pair with absolute Pearson correlation above $0.999$, leaving $423$. No transformation or standardization is applied. Together with age and pathologic stage, the dataset comprises $425$ predictors. Pathologic stage is the only categorical predictor.

\begin{table}[htbp]
\centering
\setlength{\tabcolsep}{12pt}
\begin{tabular}{lcc}
\toprule
Step & Removed & Remaining \\
\midrule
Patients with clinical and microRNA data & --- & 409 \\
Undetermined 18-month survival status & 100 & 309 \\
Stage~I or unrecorded stage & 3 & 306 \\
\bottomrule
\end{tabular}
\caption{\textit{Cohort construction}. Complete-case exclusions applied in order to the $409$ patients with both clinical and microRNA data.}
\label{tab:blca_cohort}
\end{table}


\section{Machine learning datasets}\label{supsec:ml}


The seven datasets are all binary classification tasks; their sizes and predictor counts are in \cref{tab:ipss_null_spikein}. Cylinder Banding, German Credit, and QSAR Biodegradation are from OpenML \citep{vanschoren2014openml}, and SAheart, Titanic, Hepatitis, and Hypothyroid from the Penn Machine Learning Benchmark \citep{Olson2017PMLB}. Cylinder Banding concerns banding defects in rotogravure printing; SAheart and Hypothyroid predict coronary heart disease and thyroid status from clinical measurements, respectively; Titanic predicts passenger survival during the sinking of the Titanic; Hepatitis predicts patient survival; German Credit predicts credit risk; and QSAR Biodegradation predicts whether a chemical is readily biodegradable from molecular descriptors.

Predictor types are set based on observed values: a numeric predictor with more than $20$ distinct values is treated as continuous; those with 20 or fewer distinct values or with non-numeric predictors are categorical. Predictors with only one distinct value are dropped. For Cylinder Banding we also remove date and identifier. Missing continuous values are imputed with the full-data median; missing categorical values form their own category. Categorical predictors are integer-encoded before fitting.

For each dataset we run IPSS using the Python package \texttt{ipss}. Jittered selectors perturb categorical predictors once prior to running IPSS and are otherwise identical to their unperturbed counterparts. \cref{fig:ipss_selection_cardinality} shows how many predictors each selector picks and how many distinct values those predictors have. RF's selections skew toward many-category and continuous predictors relative to selections made by the other four methods.

\begin{figure}[htbp]
\centering
\includegraphics[width=\linewidth]{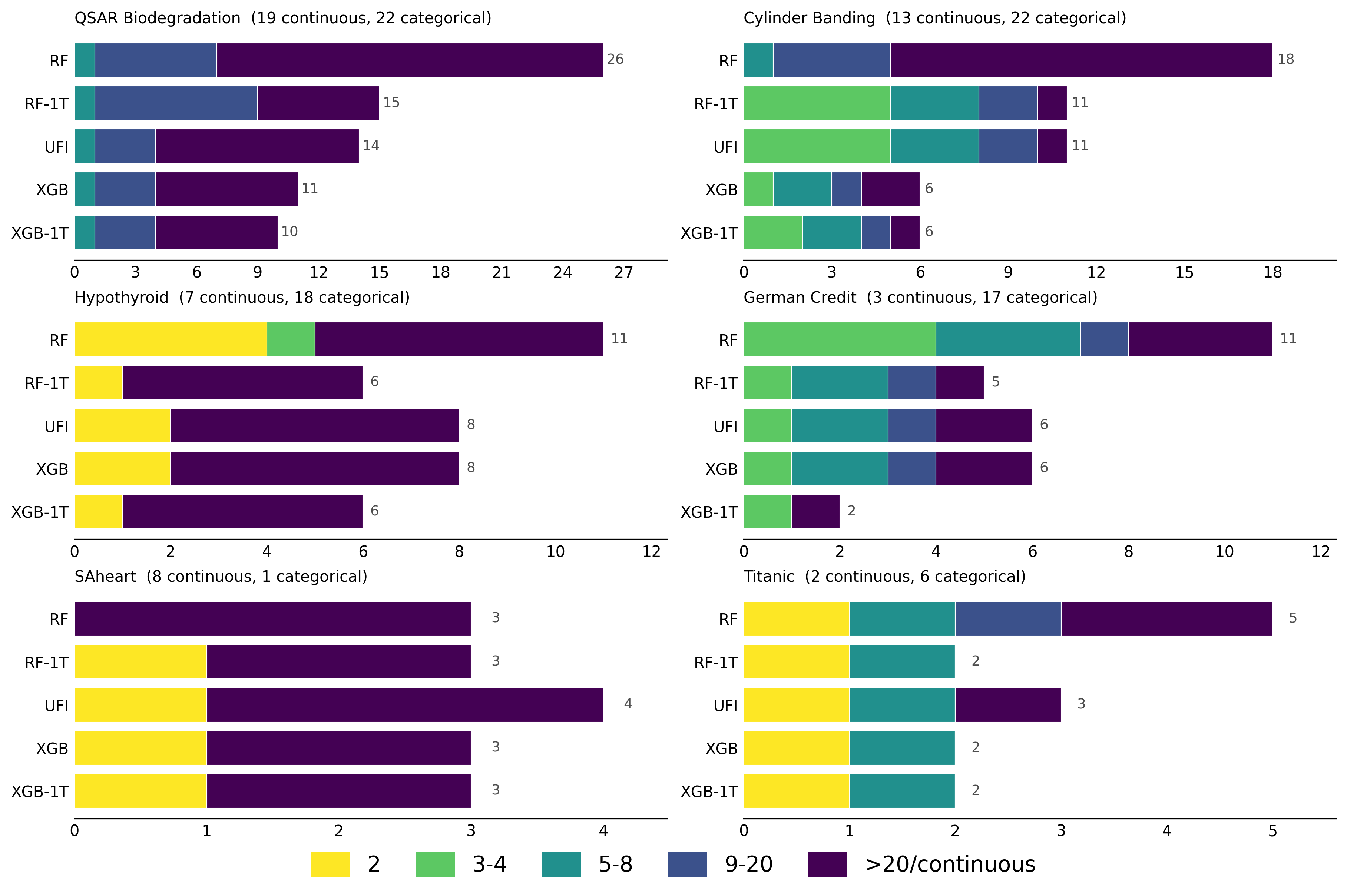}
\caption{\textit{Selections by predictor type}. One panel per dataset (continuous and categorical predictor counts in the title); within each, one horizontal bar per base selector (RF-1T and XGB-1T are the one-time jittered selectors) whose length is the number of predictors selected by IPSS at a nominal FDR level of $0.1$, split by the predictor's number of distinct values, with continuous predictors counted in the top bin ($>20$). RF selects the most predictors on every dataset except SAheart and is dominated by continuous and many-category predictors.}
\label{fig:ipss_selection_cardinality}
\end{figure}

Jittered RF and UFI select similar predictors, as do XGBoost and jittered XGBoost, and all four differ from RF, which makes more selections than the other methods in general. On Cylinder Banding, jittered RF and UFI select all five process controls, each of which has only a few categories; RF selects none of them, instead selecting high-category predictors, such as \texttt{customer}, which takes $71$ values and is essentially a record identifier. On SAheart, every method except RF selects family history (\texttt{Famhist}), the only categorical predictor in this dataset and a known risk factor for coronary heart disease. German Credit and Hypothyroid have only three and seven continuous predictors, so RF selects the categorical predictors even without jittering; here jittering mainly drops the weak predictors and keeps the strong ones, such as \texttt{checking\_status} and \texttt{TSH\_measured}. On Hypothyroid, though, jittering and UFI also drop \texttt{on\_thyroxine} and \texttt{query\_hypothyroid}, both of which are clinically informative. On QSAR Biodegradation, jittering adds atom-count predictors such as \texttt{nO}, \texttt{nN}, and the halogen count \texttt{nX}, which RF skips in favor of descriptors that take many distinct values.

\begin{figure}[htbp]
\centering
\includegraphics[width=\linewidth]{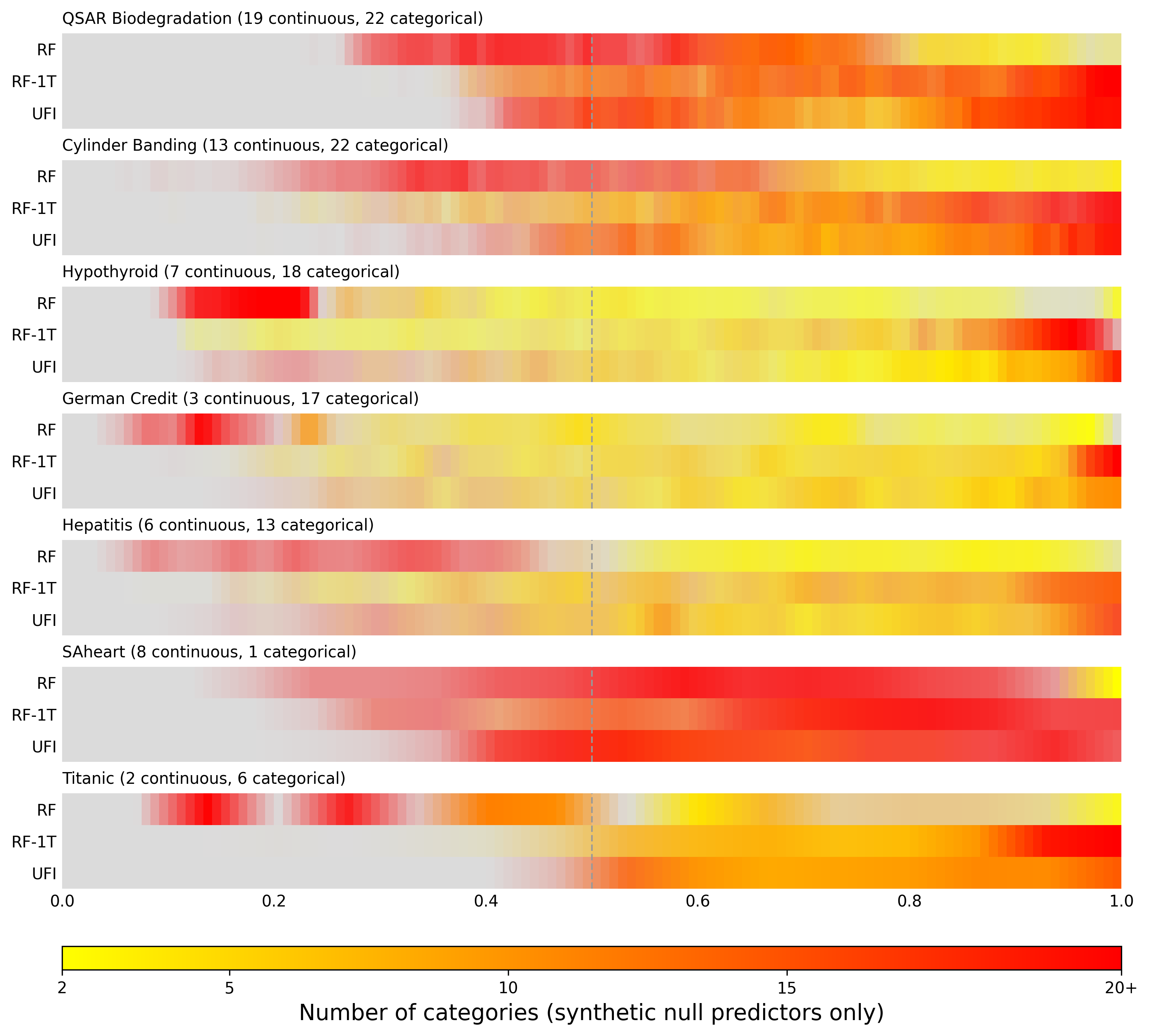}
\caption{\textit{Ranking distributions of synthetic nulls}. One panel per dataset; within each, one row per importance method and one column per rank position (rank $1$ is most important). Observed predictors are grey; synthetic null predictors are coloured by their number of distinct values, from yellow (binary) to red ($20$ or more, and all continuous nulls), averaged over $50$ seeds. Each dataset is augmented with one permuted null per observed predictor, so half of every row is null; the dashed line marks the midpoint. Under standard RF the nulls reach into the top half in order of cardinality; one-time jittering and UFI push them below the midpoint.}
\label{fig:rank_null_spikein}
\end{figure}


\section{Sensitivity analyses}\label{supsec:sensitivity}


We test whether the jittered methods are sensitive to the jitter strength $\delta$. We rerun the four settings of the main AUC study (\cref{sec:simulation_auc})---independent or correlated covariates crossed with linear or nonlinear signals---for RF-one-time and XGB-one-time at $\delta\in\{10^{-6},10^{-4},10^{-2},10^{-1}\}$, with $100$ trials per setting and every $\delta$ evaluated on the same data, response, and active set within a trial.

\begin{table*}[htbp]
\centering
\resizebox{\textwidth}{!}{
\begin{tabular}{llcccc}
\toprule
Dependence and signal
& Method
& $\delta=10^{-6}$
& $\delta=10^{-4}$
& $\delta=10^{-2}$
& $\delta=10^{-1}$ \\
\midrule
Independent--linear
& RF-one-time
& 0.945 (0.039)
& 0.944 (0.041)
& 0.944 (0.040)
& 0.944 (0.040) \\
& XGB-one-time
& 0.843 (0.066)
& 0.842 (0.062)
& 0.838 (0.064)
& 0.838 (0.064) \\
\addlinespace

Independent--nonlinear
& RF-one-time
& 0.891 (0.059)
& 0.891 (0.058)
& 0.890 (0.058)
& 0.890 (0.058) \\
& XGB-one-time
& 0.789 (0.076)
& 0.777 (0.076)
& 0.774 (0.074)
& 0.774 (0.074) \\
\addlinespace

Correlated--linear
& RF-one-time
& 0.852 (0.067)
& 0.857 (0.067)
& 0.858 (0.067)
& 0.858 (0.067) \\
& XGB-one-time
& 0.784 (0.085)
& 0.775 (0.081)
& 0.773 (0.081)
& 0.773 (0.081) \\
\addlinespace

Correlated--nonlinear
& RF-one-time
& 0.862 (0.062)
& 0.862 (0.063)
& 0.862 (0.064)
& 0.861 (0.063) \\
& XGB-one-time
& 0.765 (0.075)
& 0.767 (0.078)
& 0.768 (0.079)
& 0.768 (0.079) \\
\bottomrule
\end{tabular}
}
\caption{\textit{Feature-recovery AUC under different jitter strengths}. Entries are the mean (standard deviation) across $100$ trials.}
\label{tab:jitter_sensitivity}
\end{table*}

\begin{figure}[htbp]
    \centering
    \includegraphics[width=\linewidth]{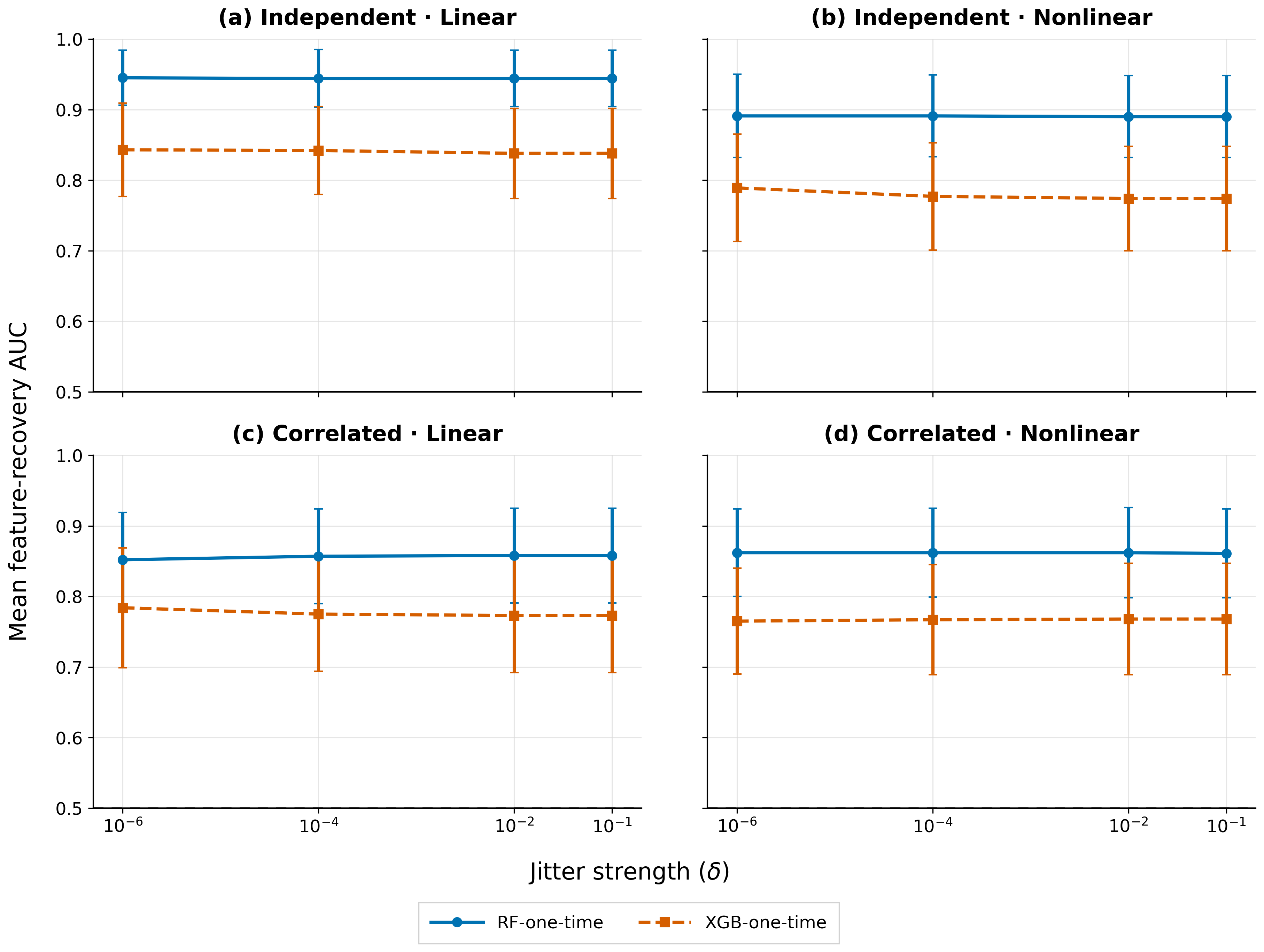}
    \caption{\textit{Sensitivity of selection to jitter strength.} Average AUC in each of the four simulation settings across different jitter strengths, $\delta$. Points are the mean AUC over $100$ trials; error bars are one standard deviation.}
\label{fig:jitter_sensitivity}
\end{figure}

Feature-recovery AUC is essentially flat in $\delta$ (\cref{tab:jitter_sensitivity}, \cref{fig:jitter_sensitivity}). Within each setting, the gap between the largest and smallest mean AUC is at most $0.006$ for RF-one-time and $0.016$ for XGB-one-time, far smaller than the trial-to-trial variation and than the gap between the two methods. The default $\delta=10^{-4}$ performs like every other value tested, so it does not require tuning.


\section{Prediction performance}\label{supsec:prediction}


To check whether jittering harms predictive accuracy, we compare one-time-jittered and standard tree ensembles on 51 datasets from the Penn Machine Learning Benchmark \citep{Olson2017PMLB}. Each dataset has at least one continuous and one categorical predictor, fewer than $15{,}000$ samples, and at least five predictors. The collection includes both classification and regression tasks. 

For each dataset we fit a random forest and an XGBoost model and evaluate the test error---misclassification rate for classification, mean squared error for regression---averaged over five cross-validation folds, which are stratified by the response for classification tasks. We then repeat the evaluation with a one-time uniform jitter of strength $\delta=10^{-4}$ and record the error ratio, defined as the jittered error divided by the unjittered error. A ratio below one means jittering improves predictive accuracy; a ratio above one means it degrades it.

\cref{fig:prediction_error_pmlb} shows the distribution of these error ratios across datasets. After removing datasets on which both models are near-perfectly accurate (mean error below $0.05$ for both unjittered and jittered models), $33$ datasets remain ($22$ classification, $11$ regression), ranging from $47$ to $9{,}822$ samples and from $5$ to $85$ predictors. For random forests, the median error ratio is $1.00$ with IQR $[0.93, 1.02]$; for XGBoost it is $0.99$ with IQR $[0.94, 1.03]$. Adding noise to categorical predictors thus has no systematic effect on predictive accuracy, consistent with the stability of the importance rankings in \cref{supsec:sensitivity} and with the claim in \cref{sec:jitter_prediction}.

\begin{figure}[htbp]
\centering
\includegraphics[width=0.85\textwidth]{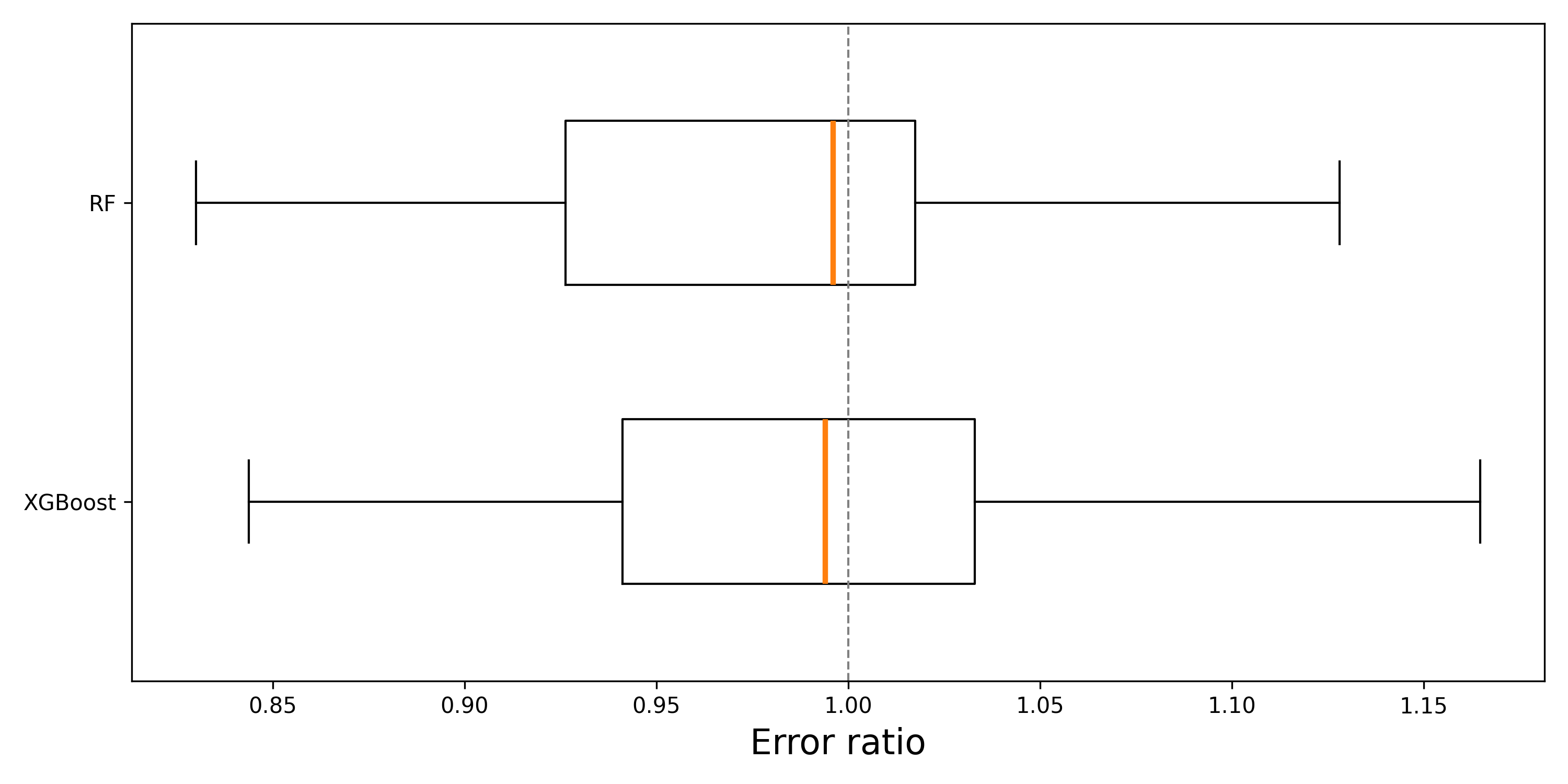}
\caption{\textit{Jittering has little effect on predictive accuracy}. Distribution over Penn Machine Learning Benchmark (PMLB) datasets of the error ratio between jittered and unjittered ensembles, for random forests (top) and XGBoost (bottom). The dashed gray line at an error ratio of one marks no change; values below one indicate that jittering reduces error, while values above indicate an increase.}
\label{fig:prediction_error_pmlb}
\end{figure}

\FloatBarrier

\putbib[reference]
\end{bibunit}

\end{document}